\crefname{section}{\S}{\S\S}
\Crefname{section}{\S}{\S\S}
\crefname{figure}{Fig.}{Fig.}
\crefname{alg}{Alg.}{Alg.}
\crefname{line}{line}{lines}
\crefname{appendix}{App.}{}
\crefname{equation}{eq.}{eqs.}
\crefname{table}{Table}{Tables}
\crefname{proposition}{Proposition}{Propositions}
\crefname{remark}{Remark}{Remarks}
\crefname{assumption}{Assumption}{Assumptions}
\DeclareMathOperator*{\argmin}{\mathrm{argmin}}
\DeclareMathOperator*{\expect}{\mathrm{E}}
\DeclareMathOperator*{\variance}{\mathbb{V}}
\newtheorem{assumption}{Assumption}
\newtheorem{remark}{Remark}
\newtheorem{lemma}{Lemma}
\newtheorem{defin}{Definition}
\newtheorem{defhypothesis}{Hypothesis}
\newtheorem{example}{Example}
\newcommand{\mathfunc}[1]{{\textcolor{purple}{#1}}}
\newcommand{\mathword}[1]{{\textcolor{MidnightBlue}{#1}}}
\newcommand{\mathcontext}[1]{{\textcolor{BurntOrange}{#1}}}
\newcommand{\mathwordform}[1]{{\textcolor{ForestGreen}{#1}}}
\newcommand{\mymacro}[2]{\newcommand{#1}{\mathfunc{#2}}}
\newcommand{\myword}[2]{\newcommand{#1}{\mathword{#2}}}
\newcommand{\mycontext}[2]{\newcommand{#1}{\mathcontext{#2}}}
\newcommand{\mywordform}[2]{\newcommand{#1}{\mathwordform{#2}}}
\newcommand{\one}{\mathbbm{1}}
\myword{\word}{w}
\mycontext{\context}{c}
\mycontext{\Context}{C}
\newcommand{\lang}{\ell}
\newcommand{\entfunc}{\mathrm{H}}
\newcommand{\surp}{\ensuremath{\entfunc(\word \mid \context)}\xspace}
\newcommand{\surpavg}{\ensuremath{\entfunc(\word \mid \Context)}\xspace}
\mywordform{\vocabfunc}{\phi}
\newcommand{\length}{|\vocabfunc(\word)|}
\newcommand{\vocabzipf}{\ensuremath{\vocabfunc_{\mathrm{zipf}}}\xspace}
\newcommand{\vocabpian}{\ensuremath{\vocabfunc_{\mathrm{cch}_{\boldsymbol{\downarrow}}}}\xspace}
\newcommand{\vocabcch}{\ensuremath{\vocabfunc_{\mathrm{cch}}}\xspace}
\newcommand{\capacity}{\ensuremath{\mathfrak{C}}\xspace}
\newcommand{\corpus}{\ensuremath{\mathcal{D}}\xspace}
\newcommand{\defeq}[0]{\mathrel{\stackrel{\textnormal{\tiny def}}{=}}}
\newcommand{\ptheta}{\ensuremath{p_{\theta}}\xspace}
\newcommand{\unk}{\ensuremath{\mathrm{unk}}\xspace}
\myword{\vocabulary}{\mathcal{W}}
\mywordform{\alphabet}{\Sigma}
\mymacro{\costfunc}{\ensuremath{\mathrm{cost}}\xspace}
\mymacro{\errfunc}{\ensuremath{\mathrm{dist}}\xspace}
\newcommand{\cost}[2]{\errfunc\!\left(#1, #2\right)}
\newcommand{\R}{\ensuremath{\mathbb{R}}\xspace}
\newcommand{\Z}{\ensuremath{\mathbb{Z}}\xspace}
\newcommand{\defn}[1]{\textbf{#1}}
\newcommand{\zipfname}{\citeauthor{zipf1949human}\xspace}
\newcommand{\piantadosiname}{\citeauthor{piantadosi2011word}\xspace}
\newcommand{\hypothesis}{channel capacity hypothesis\xspace}
\newcommand{\effectivechars}{\Delta}
\mywordform{\vocabspacegeneral}{\Phi}
\newcommand{\vocabspace}{\vocabspacegeneral_\lang}
\newcommand{\constraints}{\vocabfunc \in \vocabspace}
\newcommand{\costzipf}{\textsc{zipf}\xspace}
\newcommand{\costcch}{\textsc{cch}\xspace}
\newcommand{\hyp}{\textsc{cch}\xspace}
\newcommand{\costlower}{\ensuremath{\textsc{cch}_{\boldsymbol{\downarrow}}}\xspace}
\newcommand*{\circled}[1]{\tikz[baseline=(char.base)]{
            \node[shape=circle,draw,inner sep=1pt] (char) {\normalfont{\small #1}};}}
\newcommand{\valpha}{\boldsymbol{\alpha}}
\newcommand{\prefixes}{\textsc{prefixes}}
\newcommand{\phonotacticspacegeneral}{L}
\newcommand{\phonotacticspace}{\phonotacticspacegeneral_{\lang}}
\newcommand{\effectivecharssize}{K}
\newcommand{\sequence}{\boldsymbol{s}}
\newcommand{\sequencesing}{s}
\newcommand{\eos}{\mathrm{eos}}
\newcommand{\stringspace}{\mathcal{S}}
\newcommand{\ucambridge}{1}
\newcommand{\ethz}{2}
\newcommand{\texas}{3}
\title{Revisiting the Optimality of Word Lengths}
\author{
 Tiago Pimentel$^{\ucambridge,\ethz}$ \quad
 Clara Meister$^{\ethz}$ \quad
 \textbf{Ethan Gotlieb Wilcox}$^{\ethz}$ \\
   \textbf{Kyle Mahowald}$^{\texas}$ \quad
  \textbf{Ryan Cotterell}$^{\ethz}$
\\
 $^{\ucambridge}$University of Cambridge~\;~  $^{\ethz}$ETH Zürich~\;~ 
 $^{\texas}$University of Texas, Austin\\
 \{\texttt{\href{mailto:tiago.pimentel@inf.ethz.ch}{tiago.pimentel}},
 \texttt{\href{mailto:clara.meister@inf.ethz.ch}{clara.meister}},
 \texttt{\href{mailto:ethan.wilcox@inf.ethz.ch}{ethan.wilcox}},
 \texttt{\href{mailto:ryan.cotterell@inf.ethz.ch}{ryan.cotterell}}\}\texttt{@inf.ethz.ch} \\
 \texttt{\href{mailto:mahowald@utexas.edu}{mahowald@utexas.edu}}
}
\begin{document}
\maketitle
\begin{abstract}
\citet{zipf1935psychobiology} posited that wordforms are optimized to minimize utterances' communicative costs. 
Under the assumption that cost is given by an utterance's length, he supported this claim by showing that words' lengths are inversely correlated with their frequencies. 
Communicative cost, however, can be operationalized in different ways. 
\citet{piantadosi2011word} claim that cost should be measured as the distance between an utterance's information rate and channel capacity, which we dub the channel capacity hypothesis (\hyp) here.
Following this logic, they then proposed that a word's length should be proportional to the expected value of its surprisal (negative log-probability in context).
In this work, we show that \citeauthor{piantadosi2011word}'s derivation does not minimize \hyp's cost, but rather a lower bound, which we term \costlower.
We propose a novel derivation, suggesting an improved way to minimize \hyp's cost.
Under this method, we find that a language's word lengths should instead be proportional to the surprisal's expectation \emph{plus its variance-to-mean ratio}.
Experimentally, we compare these three communicative cost functions: \zipfname's, \costlower, and \hyp.
Across 13 languages and several experimental settings,
we find that length is better predicted by frequency than either of the other hypotheses.
In fact, when surprisal's expectation, or expectation plus variance-to-mean ratio, is estimated using better language models, it leads to worse word length predictions.
We take these results as evidence that Zipf's longstanding hypothesis holds.\looseness=-1
\vspace{.75pt}

\hspace{.5em}\includegraphics[width=1.25em,height=1.25em]{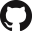}{\hspace{.75em}\parbox{\dimexpr\linewidth-2\fboxsep-2\fboxrule}{\footnotesize \url{https://github.com/tpimentelms/optimality-of-word-lengths}}}

\end{abstract}

\section{Introduction}

Zipf proposed the idea that languages are optimized to minimize their expected utterance length \cite{zipf1935psychobiology}.\footnote{We will refer to this hypothesis as \costzipf.}
Under this hypothesis, a word's length should be inversely proportional to its \defn{frequency}.
Indeed, this relationship has been attested across a wide variety of the world's languages \citep[\textit{inter alia}]{grzybek2015word,bentz2016zipf}.

In subsequent work, \citet{piantadosi2011word} offered a complementary account of communicative cost. 
Starting from the hypothesis that information rate should be roughly constant during communication \citep[UID;][]{fenk1980konstanz,levy2007speakers}, they argue that word lengths should make information rates as close as possible to a hypothetical channel capacity, where the channel refers to the means by which information is transferred from one person to another.
We term this the \defn{\hypothesis} (\costcch).\footnote{\hyp is one of the many instantiations of the uniform information density (UID) hypothesis.
We introduce this new terminology here to make the hypothesis' name more descriptive.\looseness=-1}
They conclude that lengths should be proportional to a word's \defn{expected surprisal} instead.\footnote{Surprisal is defined as negative log-probability in context.}\looseness=-1

\begin{figure}[t]
    \centering
    \includegraphics[width=\columnwidth]{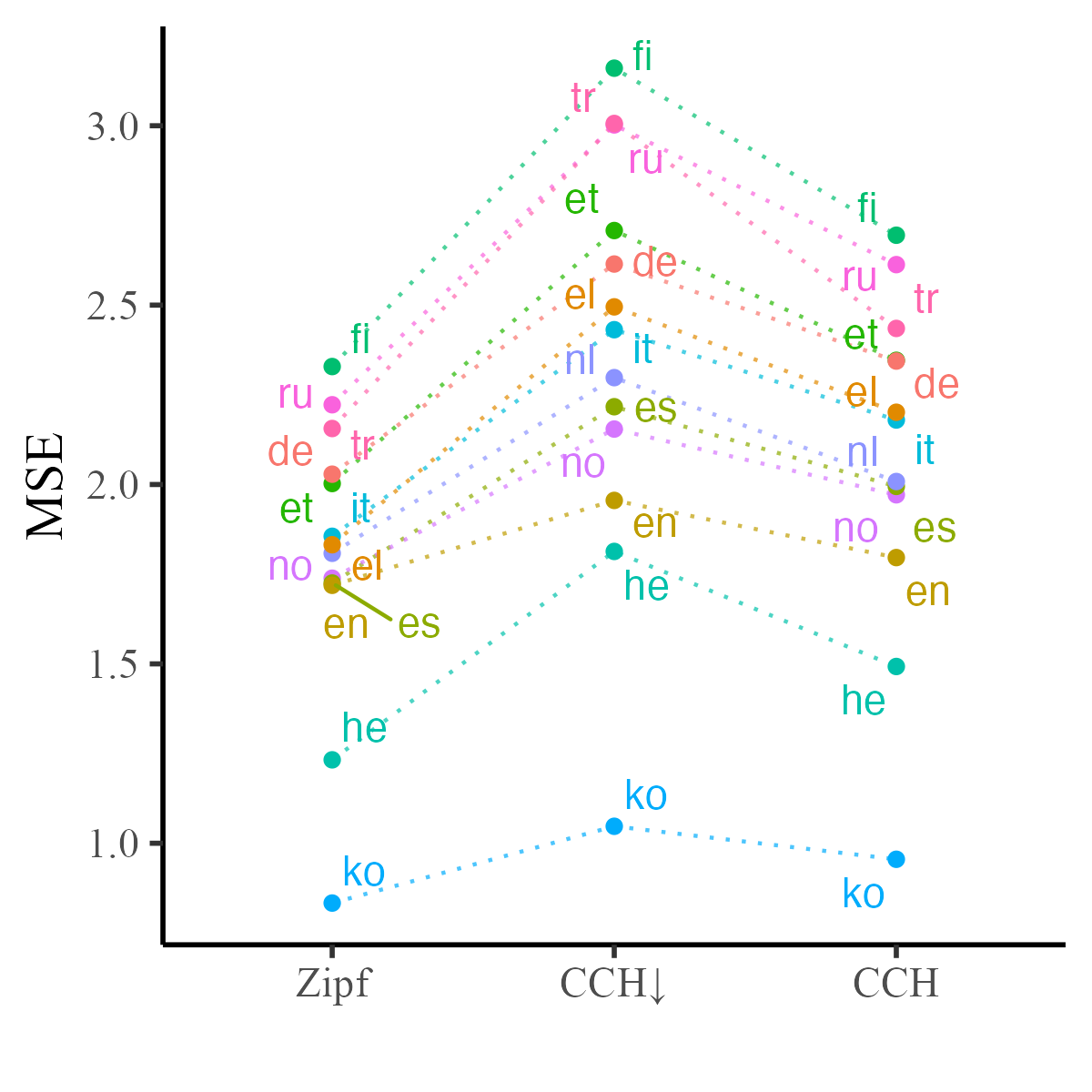}
    \vspace{-35pt}
    \caption{Mean squared error achieved by a linear model predicting real word lengths under the three hypotheses (lower is better).\looseness=-1}
    \label{fig:mse}
    \vspace{-12pt}
\end{figure}

As the communicative efficiency of language provides important insights into human cognition \citep{gibson2019efficiency}, \piantadosiname's finding that word lengths are better explained by average surprisal than frequency has been influential. 
However, there are
shortcomings: First, the manner in which \piantadosiname finds a solution which minimizes the cost associated with \hyp is not formally specified.
And second, \piantadosiname's empirical results have been shown to be sensitive to a number of methodological decisions, such as the choice of text-encoding (e.g., ascii vs. unicode), the inclusion of non-conventional wordforms and other orthographic conventions of a language \citep{meylan2021challenges,levshina2022frequency}.
Thus, there remain fundamental open questions about the relationship between communicative efficiency and word length.
Here, we aim to clarify both theoretical and empirical aspects of this relationship.\looseness=-1

Theoretically, we offer a novel, formal derivation of \citeauthor{piantadosi2011word}'s claim. 
We find that \citet{piantadosi2011word} optimize not for the objective under the \costcch, but for a lower bound on it instead; we call this the \costlower objective.
We then provide a closed-form expression for the function that determines word lengths under \costcch: Word lengths should be proportional to the \defn{expected surprisal plus its variance-to-mean ratio}.
Importantly, we derive the solution above by framing the problem of assigning wordforms as the optimization of a cost function.\footnote{As we will make explicit, we relax some optimization constraints to be able to derive closed-form solutions. These solutions will thus lead to lower bounds on the total cost.\looseness=-1}
By instantiating this optimization problem with the objectives posited by each hypothesis (\costzipf, \costcch, and \costlower), we can compute their word length predictions
within a single, unified framework.\looseness=-1

Empirically, we offer a large-scale comparison of \costzipf's, \costlower's, and \costcch's word length predictions across 13 typologically diverse languages.
Notably, we use neural language models to estimate words' surprisals, which provides more accurate estimates than the $n$-gram models relied on by prior work on this topic \citep{piantadosi2011word,meylan2021challenges,levshina2022frequency}.  
We find strong evidence (see \cref{fig:mse}) that languages are optimized to minimize their utterance lengths: A word's frequency (\costzipf's prediction) offers stronger predictive power over word lengths than either the surprisal's expected value (\costlower's prediction) or expected surprisal plus variance-to-mean ratio (\hyp's prediction).
We conclude that \zipfname's longstanding theory stands strong.\looseness=-1

\vspace{-2pt}
\section{The Lexicalization Problem}\label{sec:lex-prob}

\citet{zipf1935psychobiology,zipf1949human} posited that the lexicon is optimized for communication, taking the needs of both speakers and listeners into account.
In this section, we formalize a slice of this optimization problem.
First, we assume a fixed (but potentially infinite) vocabulary $\vocabulary$ of \defn{words}, each of which we denote as $\word \in \vocabulary$, and a fixed alphabet $\alphabet$.
Given a vocabulary and alphabet, we define a \defn{lexicon} as a function that outputs a \defn{wordform} for each word; we denote a lexicon as $\vocabfunc: \vocabulary \rightarrow \alphabet^*$ and a wordform as $\vocabfunc(\word) \in \alphabet^*$.
Note that we distinguish between a word, which is an abstract notion or concept, and its wordform, which is its ortho-phonological realization.
Further, let $p(\word, \context)$ be a language's joint probability distribution over these words and their prior linguistic context $\context \in \vocabulary^*$.\footnote{We define this distribution formally in \cref{app:define_p}.}
Finally, let $\costfunc[\vocabfunc](\word, \context)$ be a \defn{cost function} that, given a lexicon, outputs the communicative cost of a word in context.
It is often suggested that the only attribute of a wordform $\vocabfunc(\word)$ that the function $\costfunc[\vocabfunc]$ is concerned with is its length $|\vocabfunc(\word)|$, where $|\cdot|: \alphabet^* \rightarrow \Z_{+}$.
We now define the optimization problem proposed by \citeauthor{zipf1949human} as follows.\looseness=-1
\begin{defin}\label{defn:lexicalisation}
    The \defn{lexicalization problem} is the task of finding an optimal lexicon $\vocabfunc^*$, which  minimizes $\costfunc[\vocabfunc]$. 
    This lexicon can be described formally as the solution to \looseness=-1%
    \begin{equation}
    \begin{aligned} 
      \vocabfunc^* = &\argmin_{\vocabfunc} \expect_{p(\word, \context)}\!\! \costfunc[\vocabfunc](\word, \context) \\
    &\,\textit{subject to }\,\,\, \constraints \label{eq:zipf-general}
    \end{aligned}
    \end{equation}
\noindent where $\vocabspace$ is the set of valid $\vocabfunc$ for language $\lang$.\looseness=-1
\end{defin}

There are many assumptions that one could make about $\vocabspace$'s characteristics.
We make a few explicit in the following remark.

\begin{remark}\label{assumptions}
    We take the set $\vocabspace$ to include all lexicons which: \circled{1} only produce phonotactically valid wordforms,\footnote{Phonotactics tells us how phones can be combined to create wordforms in a language. If we denote the set of all possible phonotactically valid wordforms in language $\lang$ as $\phonotacticspace \subset \alphabet^*$, this means that the image of $\vocabfunc$ is contained in $\phonotacticspace$.\looseness=-1}
   \circled{2} respect morphological composition,\footnote{Roughly, if the concepts represented by $\word$ and $\word'$ overlap in a dimension that is captured by $\lang$'s morphology (e.g., plurality in English) then their wordforms $\vocabfunc(w)$ and $\vocabfunc(w')$ are likely to also partially overlap.\looseness=-1}
    and \circled{3} are uniquely decodable.\footnote{This condition is perhaps too strict.
    Homophony, for instance, is when $\vocabfunc(w) = \vocabfunc(w')$ for $w \neq w'$ and will, in general, make natural languages not uniquely decodable.
    However, if two words never appear in the same context, natural languages may still be uniquely decodable even in the presence of homophony.
    We note that whether or not natural languages are optimized for being unambiguous is contentions \citep{chomsky2002interview,piantadosi2012communicative,pimentel-etal-2020-speakers,pimentel-etal-2021-homophony,trott2020human,trott2022languages}.
    }
\end{remark}
\noindent Another implicit constraint \circled{4} regarding valid $\vocabfunc$---which comes from our specification of the output space of $\vocabfunc$---is that these mappings only produce \emph{integer-length} wordforms.

In the subsequent sections, we consider relaxations of \cref{eq:zipf-general} to arrive at simple solutions regarding the lengths provided by optimal lexica. 
Specifically, we partially relax constraint \circled{1} and fully relax constraint \circled{2} when deriving a lexicon with minimal utterance length.
Further, when deriving optimal results for both \hyp and \costlower,
we also fully relax constraints \circled{1}, \circled{3}, and \circled{4}.\footnote{
Explicitly, by relaxing \circled{4}, we allow  $|\vocabfunc(\cdot)|$ to take on continuous values.
Such a relaxation destroys $\vocabfunc$'s interpretation as assigning wordforms that live in $\alphabet^*$. 
However, it turns a combinatorial optimization problem into a continuous one and allows us to apply tools from calculus.\looseness=-1} 
Note that, as in all optimization problems, removing constraints always yields a \emph{lower bound} on the expected $\costfunc$ we  obtain under an optimal lexicon.\footnote{\citet{pimentel-etal-2021-non} estimate \cref{eq:zipf-general} while respecting all four constraints, but restricted to $\costfunc[\vocabfunc](\word, \context) = |\vocabfunc(\word)|$.\looseness=-1}\looseness=-1

\section{Revisiting Zipf's Law of Abbreviation} \label{sec:zipf_lengths}

\citet{zipf1935psychobiology,zipf1949human} 
posited a specific form that the $\costfunc$ function in \cref{eq:zipf-general} should take. 
Concretely, he posited that lexica were optimized with the goal of minimizing speakers' utterance lengths, which can be written as $\costfunc[\vocabfunc](\word, \context) = |\vocabfunc(\word)|$ in our notation.
In an attempt to formalize his position, he proposed his eponymous \defn{law of abbreviation}:
\begin{equation}\label{eq:zipfs-law}
|\vocabzipf(\word)| \propto  - \log p(\word)
\end{equation}
Over the years, Zipf's law of abbreviation has been empirically investigated numerous times \citep{gejza1994towards,sigurd2004word,kanwal2017zipfs,koplenig2022testing,levshina2022frequency,petrini2022optimality,petrini2023direct}.
We now present a formal derivation of Zipf's law of abbreviation by viewing it as an instantiation of the lexicalization problem.\looseness=-1%

\begin{defhypothesis}\label{hyp:zipf}
Zipf's hypothesis predicts that communication is made optimal by the mapping $\vocabzipf$ that satisfies:%
\begin{equation}
\begin{aligned}
    \vocabzipf = 
   & \argmin_{\vocabfunc} \,\,\expect_{p(\word, \context)}\,|\vocabfunc(\word)| \\ 
  & \,\textit{subject to }\,\,\,  \constraints \label{eq:cost-zipf-orig}
  \end{aligned}
\end{equation}
\end{defhypothesis}
If we relax constraints \circled{1} and \circled{2} in \Cref{assumptions}, then the optimal solution to \cref{eq:cost-zipf-orig} can be achieved by Huffman coding \citep{huffman1952method}.\footnote{By Kraft-McMillan's inequality, constraining our solution to not only be uniquely decodable, but to be prefix free, adds nothing in terms of length \citep{cover2005elements}.\looseness=-1}
We know that this optimal solution's word lengths respect:
\begin{align}\label{eq:huffman-opt}
|\vocabzipf(\word)| \leq - \log_{|\alphabet|} p(\word) + 1
\end{align}
which can be roughly approximated as $|\vocabzipf(\word)| \approx - \log_{|\alphabet|} p(\word)$.
Unfortunately, empirical evidence suggests that this solution, which suggests the proportionality constant in  \cref{eq:zipfs-law} equals 1, is \emph{not} representative of how natural languages behave \cite{pimentel-etal-2021-non}.
It thus gives us little insight into how actual wordforms should behave.\looseness=-1

Fortunately, we can derive a more interesting result where the proportionality in \cref{eq:zipfs-law} still holds by only \emph{partially} relaxing \circled{1} from \cref{assumptions}.
We first assume a very simplistic model of phonotactics.
Given an alphabet $\alphabet$ of phones, let $\phonotacticspace \subset \alphabet^*$ be the set of phonotactically valid wordforms in language $\lang$.
Note that this assumes \emph{deterministic} phonotactics \citep{gorman2013generative,dai-futrell-2021-simple}.\footnote{Non-deterministic models of phonotactics are also popular; see \cite{hayes_phonotactics} for a classic study. }
Further, define $\prefixes(\phonotacticspace) \defeq \{\valpha_{<t} \mid \valpha \in \phonotacticspace, t \leq |\valpha|\}$ to be the set of all prefixes in this language.\looseness=-1

\begin{assumption}
    \label{defn:phonotactic_constancy}
    The \defn{constant phonotactic assumption} assumes there exists a $\effectivecharssize \in \mathbb{Z}_{> 0}$ such that $\effectivecharssize \leq |\alphabet|$ and, for every string $\valpha\! \in\! \prefixes(\phonotacticspace)$, there exist \emph{exactly} $\effectivecharssize$ symbols $\{\sigma_k\}_{k=1}^\effectivecharssize$ for which $\valpha\sigma_k \in \prefixes(\phonotacticspace)$.\looseness=-1
\end{assumption}
In words, \cref{defn:phonotactic_constancy} says that there are exactly $\effectivecharssize$ valid symbols with which every phonotactically valid prefix can be extended.
Given this assumption, we can now find a solution to \cref{eq:cost-zipf-orig}, which only partially relaxes the phonotactic constraint in \Cref{assumptions}.

\newcommand{\vocabhuff}{\vocabfunc_{\mathrm{huff}_{\effectivecharssize}}}

\vspace{-1pt}
\begin{restatable}{thm}{ZipfOptimalTheorem}\label{thm:zipf_optimal}
The minimization problem given in \Cref{hyp:zipf} with constraint \circled{2} relaxed can be solved by Huffman coding\footnote{
Huffman coding is an efficient $\mathcal{O}\left(|\vocabulary| \log |\vocabulary| \right)$-algorithm that returns the exact solution to this problem.
Huffman coding, however, requires a finite $\vocabulary$, which may not always be the case in theory.
\citet{linder1997existence} proved the existence of an optimal code which respects \cref{eq:huffman-opt} for distributions with infinite support but finite entropy.
See \citet{pimentel-etal-2021-non} for more discussion.
} with $\effectivecharssize$ symbols.
The optimal solution is given by
    \begin{subequations} \label{eq:opt-zipf}
    \begin{align} 
        |\vocabzipf(\word)| &= |\vocabhuff(\word)| \\
        &\leq - \frac{1}{\log_{|\alphabet|} \effectivecharssize}\log_{|\alphabet|} p(\word) + 1
    \end{align}
    \end{subequations}
\end{restatable}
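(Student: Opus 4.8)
The plan is to show that, under \cref{defn:phonotactic_constancy}, the constrained problem in \Cref{hyp:zipf} is isomorphic to an ordinary $\effectivecharssize$-ary source-coding problem, and then to invoke the classical optimality of Huffman codes together with the length bound already recorded in \cref{eq:huffman-opt}.

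First I would observe that, since $\costfunc[\vocabfunc](\word,\context) = |\vocabfunc(\word)|$ does not depend on $\context$, the objective collapses to $\expect_{p(\word)}|\vocabfunc(\word)|$---the expected codeword length of $\vocabfunc\colon\vocabulary\to\alphabet^{*}$ under the marginal $p(\word)$. Then I would unpack \cref{defn:phonotactic_constancy}: it says precisely that $\prefixes(\phonotacticspace)$, regarded as a tree rooted at the empty string with an edge $\valpha \to \valpha\sigma$ whenever $\valpha\sigma \in \prefixes(\phonotacticspace)$, is the complete infinite $\effectivecharssize$-ary tree. Choosing, at each $\valpha \in \prefixes(\phonotacticspace)$, an arbitrary ordering of its $\effectivecharssize$ valid continuation symbols yields a length-preserving bijection $\psi$ between $\prefixes(\phonotacticspace)$ and $\{1,\dots,\effectivecharssize\}^{*}$; since $\psi$ is a tree isomorphism it preserves the prefix relation. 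Composing with $\psi$ therefore sets up a length-preserving correspondence between phonotactically valid lexica $\vocabfunc$ and $\effectivecharssize$-ary codes $c\colon\vocabulary\to\{1,\dots,\effectivecharssize\}^{*}$ under which $\vocabfunc$ is uniquely decodable (indeed prefix-free) if and only if $c$ is. Hence, with \circled{2} dropped, \cref{eq:cost-zipf-orig} is equivalent to minimizing $\expect_{p(\word)}|c(\word)|$ over uniquely decodable $\effectivecharssize$-ary codes.

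From there the argument is standard. By the Kraft--McMillan inequality the infimum over uniquely decodable codes equals that over prefix-free codes, and the $\effectivecharssize$-ary Huffman code $\vocabhuff$ attains it---padding $\vocabulary$ with finitely many probability-zero symbols when $|\vocabulary| \not\equiv 1 \pmod{\effectivecharssize-1}$, and invoking \citet{linder1997existence} when $\vocabulary$ is infinite with finite entropy, exactly as in the footnote. Pulling $\vocabhuff$ back through $\psi$ yields an optimal lexicon with $|\vocabzipf(\word)| = |\vocabhuff(\word)|$, which is the first line of \cref{eq:opt-zipf}. For the bound I would apply the $\effectivecharssize$-ary analogue of \cref{eq:huffman-opt}---established by the identical argument, now with alphabet size $\effectivecharssize$ and comparing against the $\effectivecharssize$-ary Shannon code with lengths $\lceil -\log_{\effectivecharssize} p(\word)\rceil$---to get $|\vocabhuff(\word)| \le -\log_{\effectivecharssize} p(\word) + 1$, and then the change-of-base identity $\log_{\effectivecharssize} p(\word) = \log_{|\alphabet|} p(\word)\,/\,\log_{|\alphabet|}\effectivecharssize$ to reach the second line of \cref{eq:opt-zipf}.

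The one place where care is needed is the reduction itself: verifying that \cref{defn:phonotactic_constancy} forces the phonotactic prefix tree to be a complete $\effectivecharssize$-ary tree, and that the relabeling $\psi$ transports unique decodability in both directions. Once the problem has been recast as unconstrained $\effectivecharssize$-ary coding, the remainder is a direct appeal to standard source-coding facts and to \cref{eq:huffman-opt} as already stated.
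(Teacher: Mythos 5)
Your proof is correct and follows essentially the same route as the paper's: reduce the phonotactically constrained problem to unconstrained $\effectivecharssize$-ary source coding via a length-preserving bijection $\psi$ between $\phonotacticspace$ and the strings over a $\effectivecharssize$-symbol alphabet, then invoke Huffman optimality and the bound $|\vocabhuff(\word)| \le -\log_{\effectivecharssize} p(\word) + 1$ followed by a change of base. If anything, your construction of $\psi$ as a prefix-preserving tree isomorphism (by ordering the $\effectivecharssize$ valid continuations at each node) is slightly more careful than the paper's, which only counts cardinalities level by level and asserts the existence of an isomorphism, leaving the transport of prefix-freeness and unique decodability implicit.
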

\begin{proof}
    The proof is available in \cref{app:proof_zipf_optimal}.
\end{proof}
\Cref{thm:zipf_optimal} makes precise the sense in which we claim to have derived Zipf's law of abbreviation.
Under the rough approximation $|\vocabzipf(\word)| \approx - \frac{\log_{|\alphabet|} p(\word)}{\log_{|\alphabet|} \effectivecharssize}$,
the proportionality in \cref{eq:zipfs-law} is realized through the unknown constant $\sfrac{1}{\log_{|\alphabet|} \effectivecharssize}$.\looseness=-1

\section{Revisiting \citet{piantadosi2011word}}

What's wrong with Zipf's law of abbreviation? 
The solution in \cref{eq:opt-zipf} is only optimal if one believes that $\costfunc[\vocabfunc](\word, \context) = |\vocabfunc(\word)|$ is the true objective underlying the lexicalization problem.
However, more recent work on communicative efficiency \citep[e.g.,][]{piantadosi2009communicative,piantadosi2011word} has proposed that speakers may intend to optimize another objective instead.
Specifically, one can take the perspective that language is an exchange of information via a noisy communication channel, where \defn{information} is operationalized as a word's surprisal $\surp = -\log p(\word \mid \context)$.
This channel has an inherent \defn{capacity} $\capacity \in \R_{> 0}$ at which information can be transmitted while the receiver is still able to effectively decode the underlying message.
Under this perspective, optimal communication happens when a word's \defn{information rate} ($\frac{\surp}{\length}$, in bits per character) is kept as close as possible to $\capacity$. 
A word's \defn{channel deviation} is then the difference between its information rate and channel capacity. 
This hypothesis can thus be stated within the framework of the lexicalization problem by defining the $\costfunc[\vocabfunc](\word, \context)$ of a lexicon as a function of the channel deviation.

\begin{defhypothesis}\label{hyp:cch}
    The \defn{channel capacity hypothesis} predicts that communication is made optimal by the mapping $\vocabcch$
    that satisfies:
\begin{equation} 
\begin{aligned}
      \vocabcch = &\argmin_{\vocabfunc} \expect_{p(\word, \context)}\!\! \cost{\frac{\surp}{\length}}{\capacity}  \\
    &\,\textit{subject to }\,\,\,  \constraints\label{eq:cost-pimentel-general}
\end{aligned}
\end{equation}
    where $\cost{x}{y}$ is a function that quantifies how far $x$ is from $y$.\footnote{We consider $\errfunc(\cdot)$ functions which satisfy the first two axioms required by true distance metrics: $\cost{x}{y} = 0 \iff x = y$ (identity of indiscernibles) and $\cost{x}{y} \geq 0$ (non-negativity), but which are not necessarily symmetric and do not necessarily satisfy the triangle inequality.}\looseness=-1
\end{defhypothesis}

Intuitively, \cref{eq:cost-pimentel-general} penalizes lexica where the length of a word causes its information rate 
to deviate from the channel capacity.
Thus, $\vocabcch$ will generate word lengths which produce information rates that are as uniform as possible. 
It follows that it can be categorized under the larger umbrella of the uniform information density hypothesis \citep[UID;][]{fenk1980konstanz,levy2007speakers}.
As discussed by \citet{meister-etal-2021-revisiting}, however, UID has several potential interpretations, only one of which involves maximizing the use of a communication channel.
Here, we will only discuss it under this perspective, and assume that its operationalization is given by \cref{eq:cost-pimentel-general}.

\subsection{Optimal Word Lengths} \label{sec:uid_lengths}

The exact solution to \cref{eq:cost-pimentel-general} depends on the choice of \errfunc.
In this section, we assume a quadratic distance function, i.e., $\cost{x}{\capacity} = (x-\capacity)^2$.
Efficient lexica should thus minimize the expected value of the square of the channel deviation under $p(\word, \context)$ (i.e., its mean squared error). 
We now derive a closed-form expression for \hyp-optimal word lengths under this cost function. 
As in \Cref{thm:zipf_optimal}, we relax the morphological \circled{2} constraint. 
Beyond this, we also relax the phonotactic \circled{1}, unique-decodability \circled{3}, and the integer-length \circled{4} constraints.
Note that, unlike in \Cref{thm:zipf_optimal}, we need to relax \circled{4} here because we have no efficient combinatorial algorithm to solve \cref{eq:cost-pimentel-general}.\looseness=-1

\begin{restatable}{thm}{UidOptimalTheorem}\label{thm:uid_optimal}
Under \Cref{hyp:cch}, if we relax \circled{1}, \circled{2}, \circled{3} and \circled{4}, the optimal word lengths are given by\looseness=-1%
\begin{align} \label{eq:opt-uid}
    |\vocabcch(\word)| = \frac{1}{\capacity}\, \frac{\expect\limits_{p(\context \mid \word)} \left[ \entfunc^2(\word \mid \context) \right]}{\expect\limits_{p(\context \mid \word)} \left[ \surp \right]}
\end{align}
\end{restatable}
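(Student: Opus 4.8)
The plan is to turn this constrained functional minimization into an elementary one-dimensional calculus problem. Once constraints \circled{1}, \circled{2}, \circled{3}, and \circled{4} are relaxed, the only way $\vocabfunc$ enters the objective of \cref{eq:cost-pimentel-general} is through the lengths $|\vocabfunc(\word)| \in \R_{>0}$; and---crucially---after dropping unique decodability \circled{3} and integer lengths \circled{4}, these values may be chosen freely and independently for each $\word \in \vocabulary$. So I would first factor $p(\word,\context) = p(\word)\,p(\context \mid \word)$ and rewrite the objective as
\begin{equation*}
  \expect_{p(\word)}\!\left[\,\expect_{p(\context\mid\word)}\!\left(\frac{\surp}{\length} - \capacity\right)^{\!2}\,\right] = \sum_{\word} p(\word)\, g_{\word}\!\left(\length\right),
\end{equation*}
where $g_{\word}(\ell) \defeq \expect_{p(\context\mid\word)}\!\left(\entfunc(\word\mid\context)/\ell - \capacity\right)^{2}$. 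Since each summand depends on $\vocabfunc$ only through $|\vocabfunc(\word)|$, and these are now unconstrained and mutually decoupled, the global minimizer is obtained by minimizing each $g_{\word}$ separately over $\ell \in \R_{>0}$.

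The inner problem is then routine. Expanding the square and using linearity of expectation gives $g_{\word}(\ell) = A_{\word}\,\ell^{-2} - 2\,\capacity\, B_{\word}\,\ell^{-1} + \capacity^{2}$, where $A_{\word} \defeq \expect_{p(\context\mid\word)}[\entfunc^{2}(\word\mid\context)]$ and $B_{\word} \defeq \expect_{p(\context\mid\word)}[\surp]$ (assumed finite). Setting $g_{\word}'(\ell) = -2A_{\word}\,\ell^{-3} + 2\,\capacity\, B_{\word}\,\ell^{-2}$ to zero yields the unique positive stationary point $\ell^{\star} = A_{\word}/(\capacity B_{\word})$, which is precisely \cref{eq:opt-uid}. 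To confirm this is the global minimum I would observe that $g_{\word}(\ell)\to+\infty$ as $\ell\to 0^{+}$ while $g_{\word}(\ell)\to\capacity^{2}$ as $\ell\to\infty$, and check the second-order condition: substituting $A_{\word} = \capacity B_{\word}\,\ell^{\star}$ into $g_{\word}''$ gives $g_{\word}''(\ell^{\star}) = 2\,\capacity\, B_{\word}/(\ell^{\star})^{3} > 0$.

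The calculation itself is short, so the step I would be most careful about is the decoupling claim in the first paragraph: one must verify that relaxing \circled{3} and \circled{4} genuinely eliminates all joint constraints among the $|\vocabfunc(\word)|$, so that term-by-term minimization of $\sum_{\word} p(\word)\, g_{\word}$ is valid. I would also dispatch the degenerate case separately---if $B_{\word} = 0$ then $\word$ has zero surprisal in every context it appears in, forcing $A_{\word}=0$ and $g_{\word}\equiv\capacity^{2}$, so its length is unconstrained; for every other word $B_{\word} > 0$ and \cref{eq:opt-uid} is well-defined. Finally, it is worth remarking that, using $A_{\word} = B_{\word}^{2} + \variance_{p(\context\mid\word)}[\surp]$, the solution can be rewritten as $|\vocabcch(\word)| = \frac{1}{\capacity}\!\left(\expect_{p(\context\mid\word)}[\surp] + \variance_{p(\context\mid\word)}[\surp] / \expect_{p(\context\mid\word)}[\surp]\right)$---expected surprisal plus its variance-to-mean ratio---which matches the informal statement in the introduction.
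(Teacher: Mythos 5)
Your proposal is correct and follows essentially the same route as the paper's proof: factor the expectation over $p(\word)$ and $p(\context\mid\word)$, differentiate each word's term with respect to its now unconstrained real-valued length, and solve the first-order condition. If anything, your verification that the stationary point is the global minimum (via the boundary limits and the second-order check, plus the degenerate zero-surprisal case) is more careful than the paper's, which simply asserts convexity of the objective in $|\vocabfunc(\word)|$ --- a claim that does not actually hold globally for $\ell \mapsto A_{\word}\,\ell^{-2} - 2\,\capacity\,B_{\word}\,\ell^{-1} + \capacity^{2}$ on $\R_{>0}$.
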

\begin{proof}
    The proof is available in \cref{app:proof_uid_optimal}.
\end{proof}
We note that \Cref{eq:opt-uid} is equivalent to the expected surprisal plus a variance-to-mean ratio.\footnote{
This can be seen via the following manipulations: 
$\frac{\expect \left[ x^2 \right]}{\expect \left[x\right]} = 
\expect \left[ x \right] + \frac{
\expect \left[ x^2 \right] - \expect \left[ x \right]^2
}{\expect \left[ x \right]} = 
\expect \left[ x \right] + 
\frac{\variance \left[ x \right]}{\expect \left[ x \right]}$}\looseness=-1

\subsection{Choices of Distance} \label{sec:other_cost}

In the above section, we assumed a quadratic penalty for a word's channel deviation. 
There is, however, no inherent reason why \errfunc{} should be quadratic. 
We thus examine alternative ways to quantify the deviation between a word's information rate and the channel capacity. 
Different choices of $\errfunc$ will then each define a cost function through $\costfunc[\vocabfunc](\word, \context) = \cost{\frac{\surp}{\length}}{\capacity}$.\looseness=-1

Any specific utterance should fall in one of three cases:
First, a word's information rate may be \defn{at capacity}, i.e., when $\frac{\surp}{\length} = \capacity$.
In this case, there are no \hyp-based costs. 
As the capacity is a real number, however, this is virtually impossible in practice. 
Second, information rate may be \defn{below capacity}. 
This will imply an opportunity cost on communication:
speakers will need more time to produce their utterances than desired, which is not ideal from the perspective of communicative efficiency \cite{levy2007speakers,kanwal2018word}. 
Third, information rate may be \defn{above capacity}. 
This again implies a cost on communication; since communicating above a channel's capacity is provably noisy \citep{shannon1948mathematical}, there may be communication faults which will either lead to the wrong meaning being conveyed, or will require a potential retransmission of the message.\looseness=-1

The quadratic distance function that we have proposed above assumes a symmetric cost, where communication above or below capacity are equally harmful. 
It is, however, reasonable to assume that the cost associated with communicating above capacity may be higher than the opportunity cost of communicating below it.
This leads us to propose costs based on the following generalized distance function:\looseness=-1
\begin{align}\label{eq:lambda-cost}
    \cost{x}{\capacity} = \left\{\begin{array}{rr}
        \lambda\,(x - \capacity)^2 & \textbf{ if } x > \capacity \\
        (x - \capacity)^2  & \textbf{else}
    \end{array} \right.
\end{align}
where $\lambda \in \R_{>0}$.
Under this generalized distance function, any value $\lambda > 1$ will imply a larger penalty for communicating above than below capacity.
Further, when $\lambda=1$ we recover the symmetric quadratic distance function proposed earlier.\looseness=-1

Notably, when assuming this generalized distance function, there is no capacity-agnostic closed-form value to which word lengths should be proportional.
Here, we find $\text{\hyp}_{\lambda}$-optimal lengths with a two step process:
(i) given a large set of surprisal values paired with their word lengths, we find what the optimal capacity is for a language;
(ii) we then use a gradient descent-based optimizer to find the optimal lengths under that capacity.

\subsection{\citeposs{piantadosi2011word} Lower Bound}\label{sec:lower-bound}

In their paper, \piantadosiname offer a similar argument to the one proposed in this section. 
They state, however, 
that the optimal word lengths follow:
\begin{align} \label{eq:opt-propto-piantadosi}
    |\vocabpian(\word, \context)| &\propto
    \surpavg 
\end{align}
where $\surpavg$ is the surprisal of word $\word$, marginalized over all contexts.
While \citeauthor{piantadosi2011word} intended to find a solution which minimizes the  cost associated with \hyp, they actually do something else.
We find that \citeauthor{piantadosi2011word}'s proposal optimizes a \emph{different} instantiation of the lexicalization problem, one that does not use the objective that formally corresponds to the \hyp hypothesis.\footnote{
See \citet{cohenpriva2015informativity}, however, for a discussion on how average surprisal could still predict a word's duration beyond individual surprisal effects.}
We give the objective \citeauthor{piantadosi2011word}'s proposal is the solution to below as its own hypothesis.\looseness=-1

\begin{defhypothesis} \label{hyp:piant}
\citeauthor{piantadosi2011word} predict that communication is made optimal by the mapping $\vocabpian$ that satisfies:%
\begin{equation} \label{eq:cost-piantadosi-general}
\begin{aligned}
 \vocabpian = &\argmin_{\vocabfunc} \expect_{p(\word, \context)} \cost{\frac{\surpavg }{\length}}{\capacity}  \\
    &\,\text{subject to }\,\,\,  \constraints  
\end{aligned}
\end{equation}
\end{defhypothesis}
We now give the connection between \Cref{hyp:piant} and \cref{eq:opt-propto-piantadosi} in the following theorem.

\begin{restatable}{thm}{UidLowerOptimalTheorem}\label{thm:uid_lower_optimal}
Under \Cref{hyp:piant},
if we further relax \circled{1}, \circled{2}, \circled{3} and \circled{4}, the optimal word lengths are given by
\begin{align} \label{eq:opt-piantadosi}
    |\vocabpian(\word)| = \frac{1}{\capacity}\, \surpavg
\end{align}
\end{restatable}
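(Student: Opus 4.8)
The plan is to mirror the proof of \Cref{thm:uid_optimal}, but to exploit a simplification that is particular to \Cref{hyp:piant}: the numerator $\surpavg$ appearing in the information rate of \cref{eq:cost-piantadosi-general} is already marginalized over contexts, hence it is a function of the word $\word$ alone. First I would fix \errfunc{} to the quadratic penalty $\cost{x}{\capacity} = (x-\capacity)^2$ used in \Cref{sec:uid_lengths}, so that the objective in \cref{eq:cost-piantadosi-general} becomes the mean squared channel deviation $\expect_{p(\word,\context)}\big(\tfrac{\surpavg}{\length} - \capacity\big)^2$.

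Next I would invoke the relaxation of constraints \circled{1}--\circled{4}. Relaxing \circled{4} lets $|\vocabfunc(\word)|$ range over $\R_{> 0}$ rather than $\Z_{+}$, and relaxing \circled{1}, \circled{2}, and \circled{3} removes every coupling between the wordforms of distinct words, so all that remains of the lexicon $\vocabfunc$ is an unconstrained choice of a positive real number $\ell_\word \defeq |\vocabfunc(\word)|$ for each $\word \in \vocabulary$. Since both $\ell_\word$ and $\surpavg$ depend on $\word$ only, the summand is constant in $\context$ given $\word$, and the expectation over $p(\word,\context)$ therefore degenerates to one over the marginal $p(\word)$:
\[
  \expect_{p(\word,\context)}\left(\frac{\surpavg}{\ell_\word} - \capacity\right)^2
  \;=\; \expect_{p(\word)}\left(\frac{\surpavg}{\ell_\word} - \capacity\right)^2
  \;=\; \sum_{\word \in \vocabulary} p(\word)\left(\frac{\surpavg}{\ell_\word} - \capacity\right)^2 .
\]

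This is a sum of independent, nonnegative terms, one per word, so it can be minimized term by term. For any $\word$ with $p(\word) > 0$, the term $\big(\tfrac{\surpavg}{\ell_\word} - \capacity\big)^2$ attains its global minimum, namely $0$, exactly when $\surpavg/\ell_\word = \capacity$, i.e. when $\ell_\word = \surpavg/\capacity$; this is admissible because $\capacity > 0$ (and strictly positive whenever $\surpavg > 0$). Words with $p(\word) = 0$ contribute nothing and may be assigned the same value without affecting the objective. Hence the minimizing lexicon satisfies $|\vocabpian(\word)| = \frac{1}{\capacity}\,\surpavg$, which is \cref{eq:opt-piantadosi}; one could instead confirm optimality by differentiating the per-word term in $\ell_\word$ and checking the second-order condition, but the nonnegativity argument makes the global nature of the minimum immediate. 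I do not anticipate a genuine obstacle: the only step requiring care is the reduction above --- spelling out why relaxing the constraints turns a lexicon-valued combinatorial problem into an unconstrained per-word optimization over the reals, and why the context-marginalization inside $\surpavg$ makes the outer expectation collapse onto $p(\word)$. This collapse is exactly what separates \Cref{thm:uid_lower_optimal} from \Cref{thm:uid_optimal}: there the numerator $\surp$ genuinely varies with $\context$, the cross term does not vanish, and the optimum is the ratio $\expect_{p(\context \mid \word)}[\entfunc^2(\word \mid \context)] / \expect_{p(\context \mid \word)}[\surp]$ rather than the plain conditional mean, whereas here the bound $\surpavg/\capacity$ is attained exactly.
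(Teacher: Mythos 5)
Your proof is correct and rests on the same observation as the paper's own (very short) proof: the proposed lengths make the channel deviation exactly zero for every word, and non-negativity of $\errfunc$ then makes this a global minimum. The only difference is that you specialize to the quadratic penalty before decomposing term by term, whereas the paper's one-line argument applies verbatim to any $\errfunc$ satisfying the identity-of-indiscernibles and non-negativity axioms.
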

\begin{proof}
    Using $\vocabfunc = \vocabpian$ as given by \cref{eq:opt-piantadosi}, we get $\cost{\cdot}{\capacity}=0$ for all words when evaluating the objective in \cref{eq:cost-piantadosi-general}.
    By definition, this is the minimum for any $\errfunc$.\looseness=-1
\end{proof}

Note that $\cost{\frac{\surpavg}{\length}}{\capacity}$ is constant with respect to individual contexts $\context$.
Thus, the expectation in  \cref{eq:cost-piantadosi-general} can simply be taken over the unigram distribution, $p(\word)$. 
Moreover, if $\errfunc$ is a convex function, then, we can use Jensen's inequality to show that \cref{eq:cost-piantadosi-general} lower-bounds \cref{eq:cost-pimentel-general}.\footnote{Note that this lower bound is with respect to the function being minimized in our optimization problem. It is therefore in addition to the lower bound that comes from relaxing this optimization problem's constraints.}
We therefore denote \citeauthor{piantadosi2011word}'s hypothesis and solution \costlower. 
\begin{restatable}{proposition}{PiantLowerBoundProposition}\label{prop:piantadosi-lower-bound}
Given a convex $\errfunc$ function and any $\vocabfunc \in \vocabspace$, 
the cost optimized by \costlower in \Cref{hyp:piant} lower-bounds \hyp's cost in \Cref{hyp:cch}
\begin{equation}
\begin{aligned}
    &\expect_{p(\word, \context)} \cost{\frac{\surp }{\length}}{\capacity} \\
    &\qquad\qquad\quad \geq
    \expect_{p(\word, \context)} \cost{\frac{\surpavg }{\length}}{\capacity}
\end{aligned}
\end{equation}
\end{restatable}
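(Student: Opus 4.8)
The plan is to fix an arbitrary lexicon $\vocabfunc \in \vocabspace$ and compare the two expected costs term-by-term inside the expectation over $p(\word, \context)$. The key observation is that the word length $\length$ does not depend on the context $\context$ (a lexicon assigns one wordform per word), so for a fixed $\word$ the quantity $\frac{1}{\length}$ is a constant that can be pulled out of the inner expectation over $p(\context \mid \word)$. I would therefore first rewrite both sides by conditioning: $\expect_{p(\word,\context)}[\,\cdot\,] = \expect_{p(\word)}\expect_{p(\context \mid \word)}[\,\cdot\,]$.

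Next, for each fixed $\word$, I would apply Jensen's inequality to the convex function $\errfunc(\cdot, \capacity)$. Define $X = \frac{\surp}{\length}$ as a random variable over contexts drawn from $p(\context \mid \word)$; since $\length$ is constant given $\word$, we have $\expect_{p(\context \mid \word)}[X] = \frac{1}{\length}\expect_{p(\context \mid \word)}[\surp] = \frac{\surpavg}{\length}$, which is exactly the argument appearing on the right-hand side. Jensen's inequality for the convex map $x \mapsto \cost{x}{\capacity}$ then gives
\begin{equation}
\expect_{p(\context \mid \word)} \cost{\frac{\surp}{\length}}{\capacity} \;\geq\; \cost{\,\expect_{p(\context \mid \word)}\!\left[\frac{\surp}{\length}\right]}{\capacity} \;=\; \cost{\frac{\surpavg}{\length}}{\capacity}.
\end{equation}
Because the right-hand side here no longer depends on $\context$, taking the outer expectation over $p(\word)$ (equivalently, reintroducing the trivial expectation over $p(\context \mid \word)$) preserves the inequality, and summing/integrating over $\word$ yields the claimed bound. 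Since $\vocabfunc$ was arbitrary in $\vocabspace$, the inequality holds for all valid lexicons, which is what the proposition asserts.

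I would take care to state explicitly why $\errfunc(\cdot, \capacity)$ being convex in its first argument is exactly the hypothesis needed, and to note that the quadratic distance $\cost{x}{\capacity} = (x - \capacity)^2$ and the generalized $\text{\hyp}_\lambda$ distance in \cref{eq:lambda-cost} are both convex, so the proposition covers all the concrete cost functions used in the paper. The main obstacle — really the only subtle point — is making sure the conditioning step is airtight: one must argue that $\length$ is genuinely independent of $\context$ under any fixed $\vocabfunc$ (true by definition of a lexicon as a function of $\word$ alone), so that the inner expectation isolates precisely $\surpavg / \length$ and the Jensen step lands on the correct expression. Everything else is a routine application of Jensen plus monotonicity of expectation.
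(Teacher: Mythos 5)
Your proposal is correct and follows essentially the same route as the paper's proof: decompose $\expect_{p(\word,\context)}$ as $\expect_{p(\word)}\expect_{p(\context\mid\word)}$, use that $\length$ is constant given $\word$ so the inner conditional mean of $\frac{\surp}{\length}$ is exactly $\frac{\surpavg}{\length}$, and apply Jensen's inequality to the convex map $x \mapsto \cost{x}{\capacity}$. Your added remarks---that the lexicon being a function of $\word$ alone is what licenses pulling $\frac{1}{\length}$ out of the conditional expectation, and that both the quadratic and the generalized $\lambda$-weighted distances are convex---are accurate and only make the argument more explicit than the paper's version.
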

\begin{proof}
    The proof is available in \cref{app:proof_lower_bound}.
\end{proof}

We now provide an example to show how \costlower's solution does not minimize $\cost{\frac{\surp}{\length}}{\capacity}$ under the distribution $p(\word,\context)$.
\begin{example}
    Let there be a word with a surprisal of 2 bits in ten distinct contexts, and a surprisal of 24 bits in a single context; assume all eleven contexts are equiprobable.
    The word's average surprisal is thus 4 bits (i.e., $\frac{10 \cdot 2 + 24}{11}$).
    Further, assume we have a channel with capacity $\capacity=2$.
    According to \Cref{thm:uid_lower_optimal}, we have
    $ |\vocabpian(\word)| = \frac{\surpavg}{\capacity} = 2$, which under the \hyp objective (\cref{eq:cost-pimentel-general}) gives us an expected cost of $10$ (i.e., $\frac{10}{11}\,(\frac{2}{2} - 2)^2 + \frac{1}{11}\,(\frac{24}{2} - 2)^2$).
    If we choose word lengths according to \Cref{thm:uid_optimal} instead, we get that the length should be $|\vocabcch(\word)| = 7$.
    This results in a cost under the \hyp objective  of roughly $2.86$.\looseness=-1
\end{example}

\section{Experimental Setup}

\subsection{Estimating Word Length Predictions}

To evaluate the different hypotheses, we test how well their predictions about word lengths align with the lengths of real languages' wordforms. 
These predictions require computing surprisals (either unigram or contextual), which are defined according to the true probability distribution $p$ (either as a function of $p(\word)$, or $p(\word \mid \context)$; the distribution $p$ is defined more precisely in \cref{app:define_p}).
While we do not have access to the true probability distribution $p$, we do have samples from it.
We use the following estimators of \cref{eq:opt-piantadosi,eq:opt-uid,eq:opt-zipf}:\looseness=-1
\begin{subequations} \label{eq:estimators_orig}
\begin{align}
    \!\! &\widehat{|\vocabzipf(\word)|}
    = - \log q(\word)  \label{eq:estimators_zipf} \\
   \!\! &\widehat{|\vocabpian(\word)| } 
= - \frac{1}{|\corpus_\word|} \!\!\sum\limits_{\context' \in \corpus_\word}\log q(\word \mid \context') \label{eq:estimators_cch_lower}   \\
    \!\! &\widehat{|\vocabcch(\word)|}
  = - \frac{\sum\limits_{\context' \in \corpus_\word}\!\!\!\! \bigg(\log q(\word \mid \context')\bigg)^2}{\sum\limits_{\context' \in \corpus_\word} \log q(\word \mid \context')} \label{eq:estimators_cch} 
\end{align}
\end{subequations}
where $\corpus_\word = \{\context' \mid (\context', \word') \in \corpus, \word' = \word \}$, and  $\corpus$ is our corpus, which we assume to be sampled from the distribution $p$. 
In practice, our corpus $\corpus$ is composed of data from one out of 13 languages from 5 language families in Wiki40B \citep{wiki40b}.\looseness=-1

Distribution $q$ is our estimate of $p$, which we implement using language models. 
We use: normalized count statistics to estimate the unigram distribution $p(\word)$, and transformer models for $p(\word \mid \context)$.
Our data and models are described in detail in \cref{sec:setup}.\footnote{
In our main set of experiments, we filter the set of words we analyze to only include the top $25$k most frequent words in a language which have wordforms composed of characters in the language's alphabet; we use alphabet's as defined in homoglyph: \url{https://pypi.org/project/homoglyphs/}.
We also pre-tokenize data with language-specific UnigramLM tokenizers, and sum subword surprisals when necessary to get per-word values.}
Note that we omit unknown constants from \cref{eq:estimators_zipf,eq:estimators_cch_lower,eq:estimators_cch}  because we only consider scale-invariant evaluation metrics.\looseness=-1

\subsection{Evaluation Metrics}

Even with access to the true $p$, comparing the word length predictions of the different theories above would be non-trivial.
Language evolution is a dynamic and noisy process: Even if one of the above optimization pressures has acted during the creation of languages' lexica, it is unlikely that they are perfectly optimal with respect to that pressure.
We thus cannot simply evaluate whether languages match our predictions exactly. 
Rather, we can instead measure if the general trends predicted by the different hypotheses match the trends observed in natural language.
We will rely on a number of metrics to evaluate our results.
Taken together these metrics should allow us to draw conclusions on which theory (if any) best correlates with observed word lengths.\looseness=-1

\paragraph{Spearman Correlation.}
First, we follow prior work \citep{piantadosi2011word,meylan2021challenges,levshina2022frequency} and use the Spearman correlation to assess the quality of each word-length hypothesis. 
A positive attribute of this correlation is that it can account for nonlinear relationships, potentially accounting for non-linear optimization obstacles. 
This metric, however, has a significant drawback: Namely, all wordforms contribute equally to its computation. 
If we evaluate large enough corpora using Spearman correlations, we will therefore consider vocabularies $\vocabulary$ mostly dominated by low-frequency and uncommon wordforms, such as typos, specialized terms, and names.
Yet arguably, when evaluating the different hypotheses, a given word should be weighted according to its usage (i.e, frequency) in a given language, as this is the case in our various optimization problems; a word's impact on the lexicalization problem's objective is a function of its frequency.
This is perhaps one of the reasons why prior work has limited their analyses to only consider a subset of the most common words per language \cite{piantadosi2011word}, a design choice that we likewise employ in our main experiments.\looseness=-1

\paragraph{Pearson Correlation.}
As a second metric, we evaluate the Pearson correlation between our predictions and actual word lengths. 
Pearson's correlation has similar drawbacks to Spearman's, differing from it only in that its value reflects the strength of \emph{linear} relationships.\looseness=-1

\paragraph{Weighted Mean Squared Error (MSE).}
As a third metric, we use weighted MSE, which avoids the main drawbacks of the previous metrics. 
We fit linear regression models (without a bias term) to predict a language's word lengths using our \costzipf, \costcch, or \costlower estimators as the sole predictor.
Importantly, we weight each squared error term by that words' frequency (both during this model's training and evaluation). 
This design choice makes our method more robust to the set of words being evaluated, since the inclusion of exponentially many low-frequency words should not substantially affect weighted MSE.
Note that this procedure is equivalent to measuring the predictive power of each hypothesis, while assuming \cref{eq:opt-zipf,eq:opt-uid,eq:opt-piantadosi} predict an expected length, and that word lengths are normally distributed around these expected values.\looseness=-1

\section{Results}

Our main results are presented in \cref{fig:mse,fig:corrs}.
In short, \cref{fig:mse} shows that words' frequencies offer stronger predictive power of word lengths (as evinced by smaller MSE) than either of the surprisal-dependent metrics.
This result provides evidence for \costzipf's hypothesis over either \costcch or \costlower.
This result is particularly surprising since we improve on \hyp's optimal word length predictions, but \costzipf's hypothesis still provides the best predictions.\footnote{We improve \hyp's optimal word length predictions over prior work both theoretically, by optimizing \hyp as opposed to \costlower, and empirically, by using stronger language models.}
A similar result can be seen in \cref{fig:corrs}, where frequency  offers the strongest correlation with lengths (in terms of both Pearson and Spearman), in all languages but English.
Notably, in our results, some languages even have a \emph{negative} correlation between the two surprisal-based measures and actual word lengths.
We now turn to analyzing different methodological choices that could impact our results.\looseness=-1

\subsection{Sensitivity to Tokenization}\label{sec:tokenization}

The first design choice that we analyze here is the choice of tokenizer that we use to preprocess our data.
As cross-entropies are necessarily larger or equal to entropies,\footnote{Cross-entropy is the (probability-weighted) average of the surprisal estimates from our language model.} it is reasonable to expect that our language model surprisal estimates may be, on average, larger than true surprisals.
While we do not know the exact per-token nature of this difference, it is conceivable that using UnigramLM tokenization could compound it: On average, longer words will naturally decompose into more subword units, and so when adding subword surprisals, the total error of a word's surprisal estimate may correlate with its number of subword units.\looseness=-1

\begin{figure}
    \centering
    \includegraphics[trim={.6cm 0 0 0},clip,width=\columnwidth]{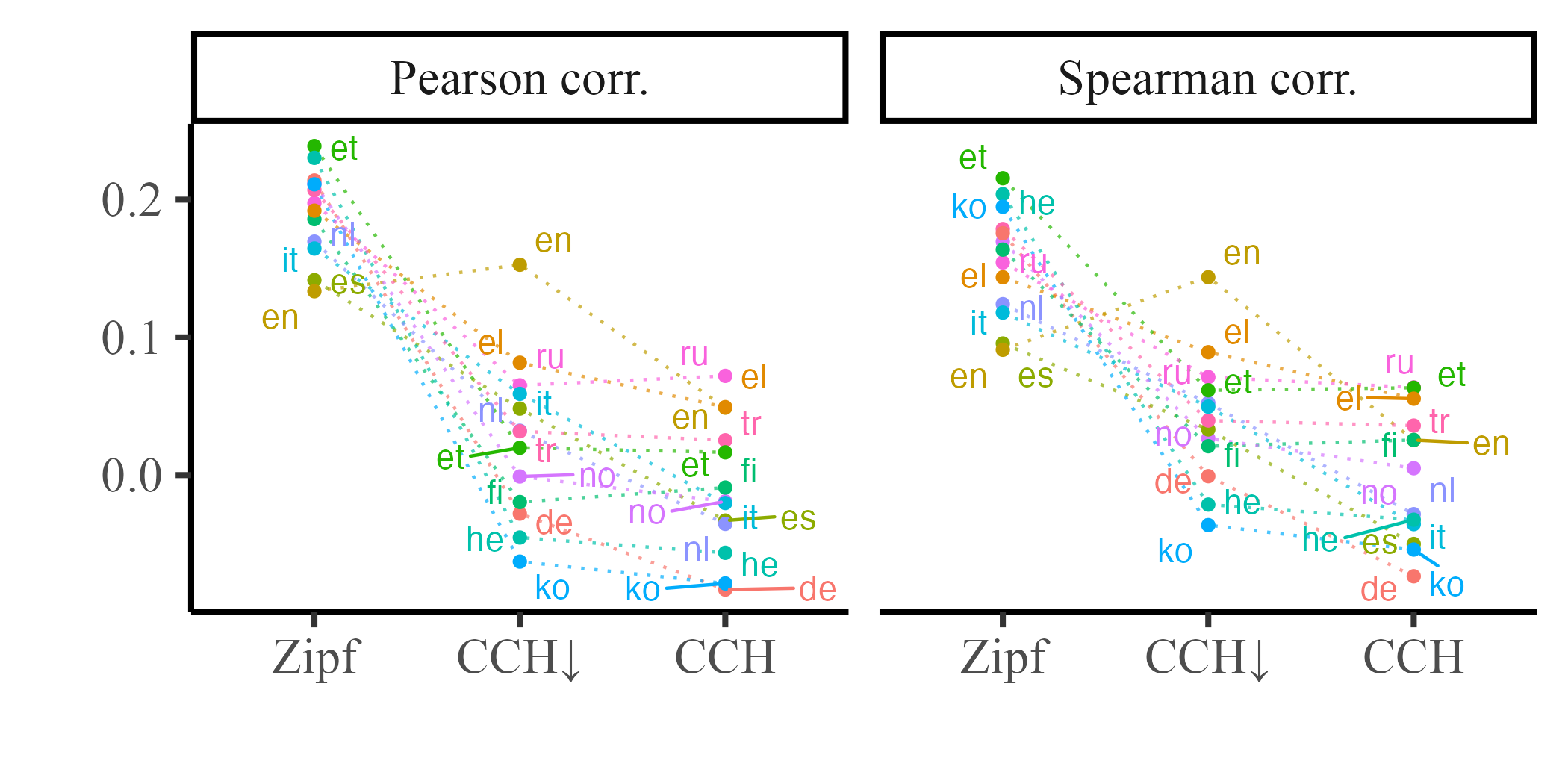}
    \vspace{-25pt}
    \caption{Pearson and Spearman correlation of the three hypotheses in the analyzed languages (higher is better).}
    \label{fig:corrs}
\end{figure}

\begin{figure*}
    \centering
    \includegraphics[width=.9\textwidth]{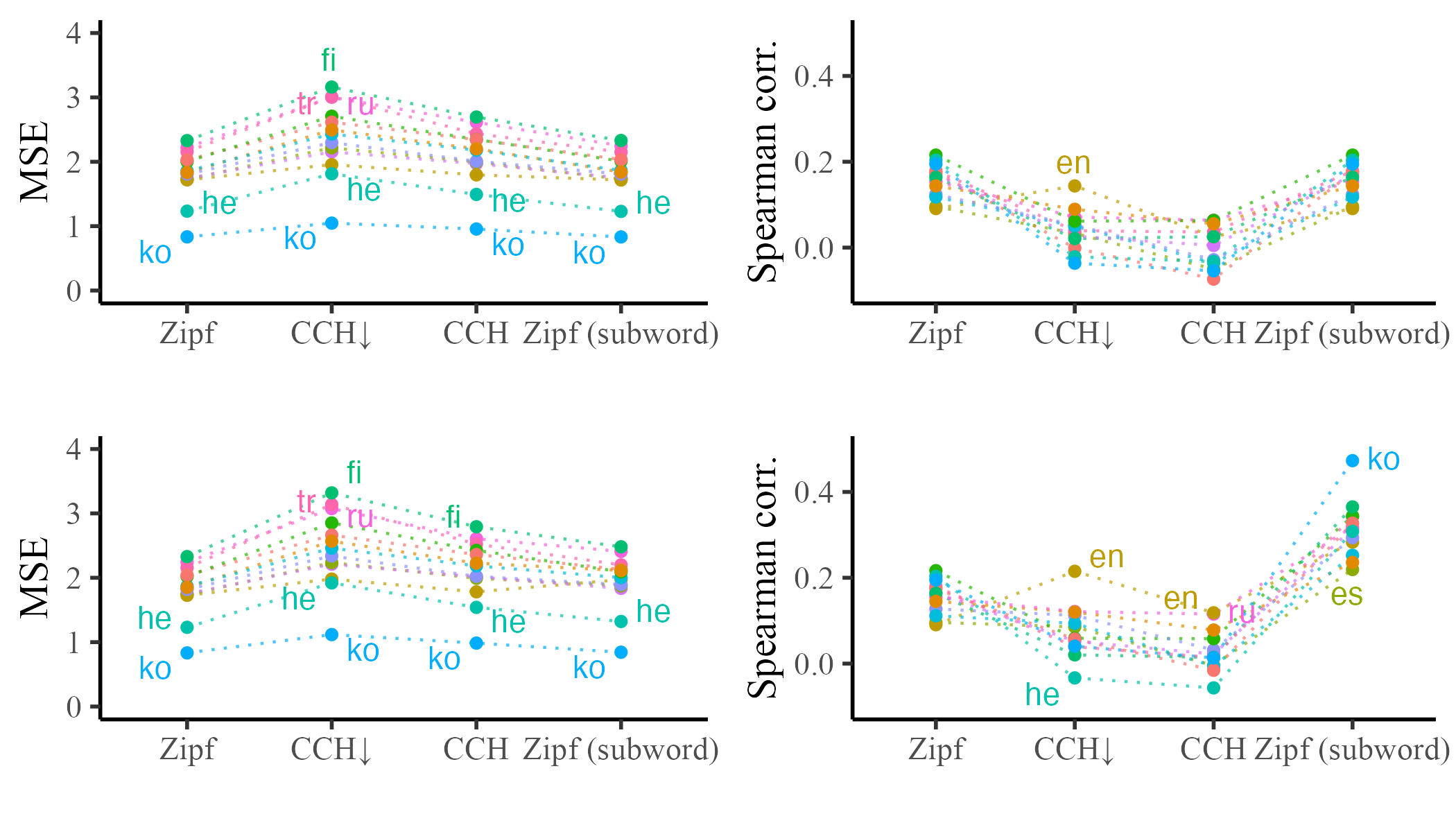}
    \vspace{-20pt}
    \caption{MSE and Spearman correlation when surprisals are estimated using either: full words directly (top), or adding subword surprisals (bottom). Note that when using full words, Zipf and Zipf (subword) are the same.}
    \label{fig:tokenisers}
\end{figure*}

To assess the impact of this potential systematic error in our estimates,
we thus re-train our models using a vocabulary of $32$k \emph{full} words, replacing any word not in this set with an \unk symbol, which is necessary when working with finite vocabularies.
Under this model, all analyzed words are encoded using a single ``subword'' unit.
We then re-analyze the three hypotheses as before.
In \cref{fig:tokenisers} (top), we see that a word's frequency is still a better predictor of its length than the quantities put forth by other hypotheses.
Further, in the only case in which \costlower offers better predictions than \costzipf (English, as evinced by higher Spearman correlations), their performance difference is now lower than before.\footnote{These correlations are, respectively, 0.09 vs. 0.21 with \costzipf and \costlower when using UnigramLM. After switching to full words they are 0.09 vs. 0.14.}\looseness=-1

We also estimate \costzipf's unigram surprisals using tokenized counts, i.e., where we count subword tokens to estimate frequencies instead of directly counting the full words in our training set. 
We then estimate the suprisal of a word as the sum of the surprisals of its subwords, thus assuming independence between them.
We display these new results in \cref{fig:tokenisers} (bottom) under the name \textbf{Zipf (subwords)}.
We see here that this tokenization scheme increases our measured correlations, and Zipf (subwords) presents the strongest correlations in all languages.
Perhaps surprisingly, tokenization seems to not influence MSE as much.\looseness=-1

\subsection{Sensitivity to Word Filtering Protocol}

Next, we analyze our results' sensitivity with respect to how we select the set of words we analyze. 
Specifically, for our analyses so far we have only considered words whose wordform is composed exclusively of characters in its language's alphabet.
We now run similar analyses, but including either: All white-space-separated words in a language's test set, or all white-space-separated words with no punctuation symbols.\footnote{We consider punctuation to be any of: !"\#\$\%\&'()*+,-./:;<=>?@[$\backslash$]\^{}\_`\{|\}\~{}.}
We denote these conditions as: $\alphabet^*_{\alpha}$ when selecting alphabet-only words, $\alphabet^*_{\circ}$ when selecting no-punctuation words, and $\alphabet^*$ when selecting all words.
We display results under each condition in \cref{fig:filtering_protocol}.
We see that across these various protocols, \costzipf's hypothesis remains the most predictive.\footnote{In \cref{app:further_results}'s \cref{fig:filtering_protocol_app}, we also see that Spearman correlation is considerably more sensitive to filtering protocols than MSE.\looseness=-1}\looseness=-1

\begin{figure}[t]
    \centering
    \includegraphics[width=\columnwidth]{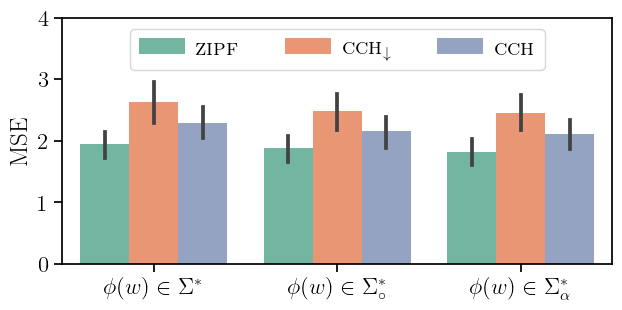}
    \vspace{-20pt}
    \caption{Average MSE across languages when hypotheses are evaluated using different word filtering protocols.}
    \label{fig:filtering_protocol}
\end{figure}

Additionally, we consider the impact of including only the top $25$k most frequent words in our analysis.
In \cref{fig:ntypes}, we present MSE values computed when using sets composed from the top $10$k most frequent words, to entire test sets.
Notably, we again see that frequency remains the best predictor of word length.
In \cref{app:further_results}'s \cref{fig:ntypes_app}, we display results per language for MSE and Spearman correlation.
There, we see that MSE rates frequency best on all languages and across all evaluated setups.
Spearman correlation evaluated on few word types similarly rates frequency over \hyp or \costlower predictions (again, except on English).
When evaluated on full test-sets, Spearman correlation shows a less straightforward conclusion:
While \costzipf still achieves the highest correlation in most languages, \costlower achieves stronger correlations in Italian, Spanish and Russian.
At this stage, however, the evaluated sets are dominated by low-frequency words, which may not be representative of the evaluated languages.

\begin{figure}
    \centering
    \includegraphics[width=\columnwidth]{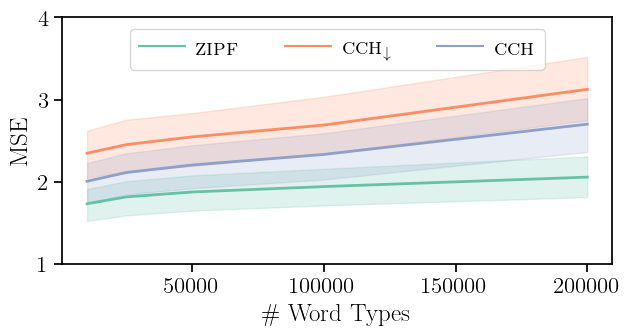}
    \vspace{-20pt}
    \caption{Average MSE across languages when hypotheses are evaluated on different number of word types.}
    \label{fig:ntypes}
\end{figure}

\subsection{Sensitivity to Model Quality}

Finally, we investigate how our model quality influences our results. 
We train new models on subsets of our training sets to get language models of different qualities.
We then use these models to assess whether there is a relationship between model quality and a hypothesis' predictive power. 
In addition to the models estimated using the full training sets, we thus train 7 new transformer and unigram models per language, each using from 1 million to 1 billion training tokens in log-uniform intervals.
We plot the predictive power of each hypothesis (\costzipf's, \costlower's and \hyp) vs. the language model's cross-entropy in \cref{fig:xent}.\footnote{We show per-language plots evaluated with both MSE and Spearman correlation in \cref{app:further_results}'s \cref{fig:xent_app}. We do not quantify our unigram models' quality, but assume that they increase monotonically with the size of the corpus on which they were estimated. We show a similar plot, but with the number of training tokens on the $x$-axis, in \cref{app:further_results}'s \cref{fig:ntokens}.}
Unintuitively, surprisal estimates of better models (i.e., with lower cross-entropies) provide \emph{worse} predictors of word length.

An additional analysis suggests that the surprisal estimates of worse language models are more strongly correlated with frequency (see \cref{fig:correlation_freq_app} in \cref{app:further_results}), which may justify this unintuitive result since frequencies are most predictive of word lengths in our experiments.
\costzipf's hypothesis, on the other hand, is robust to the quality of the used unigram model.\looseness=-1

\subsection{Sensitivity to Cost Function}

In our last set of experiments, we analyze the impact of our choice of quadratic cost function in our results.
Using the generalized cost function in \cref{eq:lambda-cost}, we derive optimal word length predictions using values of $\lambda$ from 1 to 5 in $0.25$ intervals.
We present their MSE and Spearman correlations in \cref{app:further_results}'s \cref{fig:lambda_cost}.
While there seems to be a slight tendency for $\costcch$ to be more predictive for larger values of $\lambda$, \costzipf still has the most predictive power of the different hypotheses.

\begin{figure}
    \centering
    \includegraphics[width=\columnwidth]{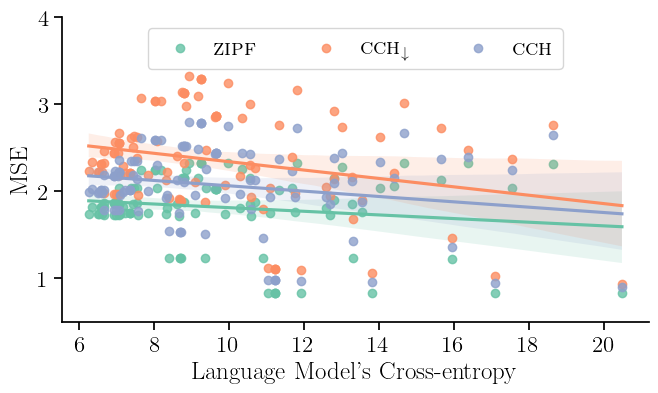}
    \vspace{-20pt}
    \caption{MSE correlation as a function of the cross-entropy of models used to get surprisal estimates.}
    \label{fig:xent}
\end{figure}

\section{Discussion}

The answer to what drives the distribution of word lengths in lexica has long been considered important for understanding the evolution and function of language (see \citealp{gibson2019efficiency} for a review).
Across multiple languages and various methodological choices, our results support Zipf's law of abbreviation over other potential explanations as a driving factor in the development of lexica.

These findings deviate from \citeauthor{piantadosi2011word}, who found average surprisal to be a stronger predictor of word lengths.
We hypothesize that this is because of methodological choices. 
Specifically, \citeauthor{piantadosi2011word} derive surprisal estimates from language models that are now outdated (in terms of their quality),
and we found that, when \costcch's predictions were computed using \emph{worse} surprisal estimates,
they had stronger correlations with length than when using better estimates. 
Like prior work on this topic \citep{meylan2021challenges,levshina2022frequency}, our analyses suggest the sensitivity of \piantadosiname's results to methodological choices.

What do these results tell us about the communicative optimization of natural language lexica?
In short, our results suggest lexica are optimized to minimize expected utterance lengths.
Notably, other linguistic properties may be optimized towards other notions of communicative efficiency.
While a word's duration is mainly determined by its wordform, speakers can still modulate this duration to a certain extent; such a modulation could target \costcch.
In fact, prior work has shown a correlation between surprisal and duration \citep{bell2003effects,aylett2004smooth,pimentel-etal-2021-surprisal}.

\section{Conclusion}

In this paper, we formalize the problem of assigning wordforms based on different notions of communicative efficiency, which we term the \defn{lexicalization problem}.
Under this framework, we describe the optimization problem related to the \hypothesis, and,
in doing so, we show that \citeauthor{piantadosi2011word}'s predictions optimized for only a lower bound on \costcch, rather than on the true objective. 
Further, while considering relaxed versions of the lexicalization problem, we derive optimal word length values for \zipfname's hypothesis and \hyp.
We then empirically evaluate \hyp's, \costlower's and \costzipf's predictions in 13 languages.
Our results strongly support \costzipf's hypothesis: Word lengths are optimized to minimize utterance lengths.\looseness=-1

\section*{Limitations}

A limitation of our work is that, when deriving optimal word lengths under \costcch and \costlower, we 
relax: the phonotactic \circled{1}, morphological composition \circled{2}, unique decodability \circled{3} and the integer-length \circled{4} requirements.
In the case of \circled{3}, if a language's channel capacity is large, this might lead to poorer predictions under both these theories.
Deriving optimal word lengths while considering this constraint is left as an open problem for future work.
In the case of \circled{4}, it is arguably unrealistic to consider continuous-length wordforms.
This issue could be addressed by using a linear program to solve problems of the form \cref{eq:zipf-general}.
This, as well as considering the role of phonotactics \circled{1} and morphological composition \circled{2} in \hyp, is likewise left for future work.
Further, we note that while we relax all four constraints to derive \hyp- and \costlower-optimal
word lengths, we only relax \circled{2} (and partially \circled{1}) to derive \costzipf-optimal lengths.
This could realistically impact the fact that \zipfname's hypothesis seems to have more predictive power over word lengths.

Another limitation is that our analyses focus solely on written data from Wikipedia. 
We recommend future work investigates how these findings generalize to spoken or signed languages, and to other text genres.
Finally, while we use a typologically diverse sample of languages, it is still skewed towards Eurasian languages. This is because the large amount of text needed for training state-of-the-art language models---necessary to estimate entropy---are not available in many languages.
Expanding the set of languages analyzed here would be necessary to confirm the generality of our results.\looseness=-1

\section*{Acknowledgements}

We thank the anonymous reviewers and meta-reviewer for their feedback on this paper.
Tiago Pimentel also thanks Hope McGovern and Simone Teufel for helpful comments on different stages of writing this manuscript.
Tiago Pimentel is funded by a Facebook PhD Fellowship. 
Clara Meister is funded by a Google PhD Fellowship.
Ethan Gotlieb Wilcox would like to acknowledge support from an ETH Postdoctoral Fellowship.

\bibliography{custom}
\bibliographystyle{acl_natbib}

\appendix

\section{Defining $p(\word, \context)$} \label{app:define_p}

In this section, we explicitly define $p(\word, \context)$.
We do this in terms of a more standard notation in language modeling.
We define a sequence of words as $\sequence \in \stringspace$ where $\stringspace \defeq \vocabulary^*\circ\{\eos\}$.
We then assume a distribution over such sequences $p(\sequence)$.
We can now define $p(\word, \context)$ as:
\begin{equation}
    \!\!p(\word, \context) \propto \sum_{\sequence \in \stringspace} p(\sequence)\,\sum_{t=1}^{T} \one\{\word = \sequencesing_t, \context=\sequence_{<t}\}
\end{equation}
In words, a word--context pair is as frequent as it would be in natural language, where once a sequence is uttered, all its word--context pairs are observed jointly.
This is not the only possible definition of $p(\word, \context)$, but it is the one we opt for here.

Note that the distribution $p(\word, \context)$ might thus not be well defined for all distributions $p(\sequence)$, as the normalizing constant in this definition might diverge.
For instance, this distribution will not be well-defined for a language model over alphabet $\vocabulary = \{a\}$, where $p(a^n) = \frac{6}{\pi^2\,n^2}$ for $n \geq 1$ and $p(\varepsilon) = 0$, as its sequences' average length diverges.\looseness=-1

\section{Data and Models} \label{sec:setup}

\paragraph{Data.}
The corpora used throughout our analyses come from Wiki40B \citep{wiki40b}.
This dataset is composed of cleaned text from Wikipedia articles in more than 40 languages, out of which we select a subset of 13 for our analysis.
Our selection includes: German, Greek, English, Spanish, Estonian, Finnish, Hebrew, Italian, Korean, Dutch, Norwegian, Russian, and Turkish. These span five language families: Afro-Asiatic, Indo-European, Koreanic, Turkic, and Uralic.
The data for each language comes pre-split into a training, validation and test set.
We fit our models using the first two sets, while performing our analyses exclusively on the test-sets. 
As discussed above, the set of analyzed words may make a large difference in the measured correlations. 
In our main set of experiments, we filter the set of words we analyze to only include wordforms composed of characters in the language's alphabet.\footnote{We use alphabet's as defined in homoglyph: \url{https://pypi.org/project/homoglyphs/}.}
\Cref{tab:data_stats} (in \cref{app:extra_data}) includes the number of word types and tokens used per language in our analyses.\looseness=-1

\paragraph{Models.}
To estimate the unigram distribution $p(\word)$, we use a simple MLE estimator: the (normalized) count statistics from our training set. 
To estimate contextual probabilities $p(\word \mid \context)$, we use an autoregressive language model $\ptheta$.
Specifically, we train monolingual transformers in each language using fairseq \citep{ott2019fairseq} with its default language modeling hyper-parameters. 
Our transformers \citep{vaswani2017attention} have 6 layers, a hidden size of 512, and 8 attention heads per layer.
Further, they can attend to a context size of at most 512 tokens, and we train them with a dropout of $0.1$, and a batch size of 64.
We optimize our models using Adam \citep{kingma2017adam} with a learning rate of $5\times10^{-4}$, weight decay of $0.01$, and $4$k warmup steps.
In our main set of experiments, we further pre-tokenize
each language's text using language-specific tokenizers fit \citep[using the UnigramLM algorithm;][]{kudo-2018-subword}  on their respective training sets, with a vocabulary of $32$k subword units.
We then compute per-word surprisals by adding the surprisals of all the subwords that the word is composed of.
(We also consider other tokenization schemes, as described in \cref{sec:tokenization}.)\looseness=-1

\section{Proof of \Cref{thm:zipf_optimal}}\label{app:proof_zipf_optimal}

Before proving \Cref{thm:zipf_optimal}, we provide a lemma which will be useful for it.
In words, we prove a length-preserving bijection between $\phonotacticspace$ and $\effectivechars^*$ for an alphabet $\effectivecharssize$ such that $|\effectivechars|=\effectivecharssize$.

\begin{lemma}\label{lemma:phonotactic_isomorphic}
    Under the constant phonotactic assumption, there exists an alphabet $\effectivechars$ with cardinality $\effectivecharssize$ such that, for every $N \geq 0$, $\effectivechars^N$ is isomorphic to $\phonotacticspace^{(N)}$, where $\phonotacticspace^{(N)}$
    is the set of phonotactically valid wordforms with length $N$.\looseness=-1
\end{lemma}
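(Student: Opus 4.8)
The plan is to build the alphabet $\effectivechars$ and the isomorphism explicitly by induction on the length $N$, using the constant phonotactic assumption to guarantee that every prefix has exactly $\effectivecharssize$ legal extensions. First I would fix $\effectivechars = \{1, \dots, \effectivecharssize\}$, which by hypothesis satisfies $\effectivecharssize \leq |\alphabet|$. The key observation is that $\prefixes(\phonotacticspace)$, viewed as a subset of $\alphabet^*$ ordered by the prefix relation, forms a tree in which every node has exactly $\effectivecharssize$ children (this is precisely \Cref{defn:phonotactic_constancy}). I would then define, for each $\valpha \in \prefixes(\phonotacticspace)$, a fixed enumeration $\sigma_1^{\valpha}, \dots, \sigma_{\effectivecharssize}^{\valpha}$ of the symbols that legally extend $\valpha$ (say, in the order inherited from some fixed total order on $\alphabet$). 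This gives a canonical bijection between $\effectivechars$ and the children of each node.

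Next I would define the map $\psi \colon \effectivechars^* \to \prefixes(\phonotacticspace)$ recursively: $\psi(\varepsilon) = \varepsilon$, and $\psi(\mathbf{k}_{\leq t}) = \psi(\mathbf{k}_{<t})\,\sigma_{k_t}^{\psi(\mathbf{k}_{<t})}$. A routine induction shows $\psi$ is length-preserving (each step appends exactly one symbol) and is a bijection $\effectivechars^N \to \prefixes(\phonotacticspace) \cap \alphabet^N$ for each $N$: injectivity follows because the enumerations $\sigma_\bullet^{\valpha}$ are injective, and surjectivity follows because they are surjective onto the children. The one subtlety is that the lemma asks for an isomorphism with $\phonotacticspace^{(N)}$, the \emph{complete} valid wordforms of length $N$, not with all prefixes of length $N$; so I would need to observe that, under the constant phonotactic assumption, $\prefixes(\phonotacticspace) \cap \alphabet^N = \phonotacticspace^{(N)}$ when restricted appropriately — or, more carefully, handle the definition of $\phonotacticspace$ so that membership in the tree at depth $N$ coincides with being a length-$N$ valid wordform. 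Since every prefix extends (the tree has no leaves, as each node has $\effectivecharssize > 0$ children), every string of length $N$ that is a prefix of some valid wordform is itself reachable, and restricting $\psi$ to length $N$ gives the claimed bijection.

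The word ``isomorphic'' here should be read as a bijection of sets that respects the prefix (or concatenation) structure; I would make this precise by noting that $\psi$ is compatible with concatenation in the sense that prefixes map to prefixes, which is immediate from the recursive definition. I expect the main obstacle to be purely a matter of bookkeeping around the definition of $\phonotacticspace$ versus $\prefixes(\phonotacticspace)$ and ensuring the inductive construction of the enumerations is well-defined uniformly over the (possibly infinite) prefix tree — there is no deep mathematical difficulty, since the constant-branching hypothesis does all the work. Once the bijection $\psi$ is in hand, \Cref{thm:zipf_optimal} follows by transporting the probability distribution $p(\word)$ through $\psi$ and applying Huffman coding over the alphabet $\effectivechars$ of size $\effectivecharssize$, which is why this lemma is stated first.
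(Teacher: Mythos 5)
Your proposal is correct and rests on the same inductive engine as the paper's proof, but it is genuinely more constructive. The paper only counts: it shows by induction that $|\phonotacticspace^{(N)}| = \effectivecharssize^N = |\effectivechars^N|$ and then invokes equal cardinality to conclude that a bijection exists. You instead build the bijection $\psi$ explicitly, by fixing an enumeration of the $\effectivecharssize$ legal continuations at each node of the prefix tree and extending recursively. This buys two things. First, your $\psi$ is a single map on all of $\effectivechars^*$ that respects the prefix relation; that is what the proof of \Cref{thm:zipf_optimal} actually uses when it transports a Huffman code over $\effectivechars$ into $\phonotacticspace$, since a family of unrelated per-$N$ bijections would preserve lengths but not obviously prefix-freeness or unique decodability. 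Second, you correctly flag the conflation of $\phonotacticspace^{(N)}$ with $\prefixes(\phonotacticspace) \cap \alphabet^N$: \Cref{defn:phonotactic_constancy} is a statement about prefixes, so the induction really counts length-$N$ prefixes, and one needs the implicit convention that these coincide with length-$N$ valid wordforms; the paper's own inductive step makes this identification silently. Your observation that the prefix tree is leafless (every node has $\effectivecharssize > 0$ children) is the right way to justify that every counted string is reachable, though strictly speaking equating depth-$N$ tree nodes with \emph{complete} wordforms of length $N$ remains a modeling convention rather than a consequence of the assumption.
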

\begin{proof}
First, it is clear that $|\effectivechars^N| = |\effectivechars|^N = \effectivecharssize^N$.
We now prove the same for $\phonotacticspace^{(N)}$ by induction.

\paragraph{Base case ($N=0$).} The set of 0-length phonotactically valid strings includes only the empty string $\{\varepsilon\}$. It follows that: $|\phonotacticspace^{(0)}| = 1 = \effectivecharssize^0$.

\paragraph{Inductive step ($N > 0$).} By the inductive hypothesis, we have that $|\phonotacticspace^{(N-1)}| = \effectivecharssize^{N-1}$.
By \cref{defn:phonotactic_constancy}, each element in $\phonotacticspace^{(N-1)}$ has $\effectivecharssize$ possible continuations in $\phonotacticspace^{(N)}$.
It follows that $|\phonotacticspace^{(N)}| = |\phonotacticspace^{(N-1)}|\,\effectivecharssize = \effectivecharssize^{N}$.

\vspace{5pt}
\noindent
Since $\effectivechars^N$ and $\phonotacticspace^{(N)}$ have the same number of elements for every $N \geq 0$, there exists an isomorphism between them.
\end{proof}

Given the lemma above, we are now in a position to prove \Cref{thm:zipf_optimal}.

{\renewcommand\footnote[1]{}\ZipfOptimalTheorem*}

\begin{proof}
Since $\phonotacticspace^{(N)}$ is isomorphic to $\effectivechars^N$ for every $N \geq 0$, there exists a length-preserving bijection $\psi$ between $\phonotacticspace$ and $\effectivechars$ (by \Cref{lemma:phonotactic_isomorphic}).
By \citeposs{huffman1952method} algorithm, 
we can construct an encoding that satisfies
\begin{equation}
\begin{aligned}
|\psi(\vocabzipf(\word))| 
\leq - \log_{|\effectivechars|} p(\word) + 1
\end{aligned}
\end{equation}
However, because $\psi$ is length-preserving, $|\psi(\vocabzipf(\word))| = |\vocabzipf(\word)|$.
As an upper bound, we thus have
\begin{subequations}
\begin{align}
    |\vocabzipf(\word)| &\leq - \log_{|\effectivechars|} p(\word) + 1 \\
    &= - \frac{1}{\log_{|\alphabet|} \effectivecharssize}\log_{|\alphabet|} p(\word) + 1
\end{align}
\end{subequations}
\end{proof}

\section{Proof of \Cref{thm:uid_optimal}}\label{app:proof_uid_optimal}

\UidOptimalTheorem*
\begin{proof}
We can easily derive these optimal word lengths from \cref{eq:cost-pimentel-general} by taking its derivative with respect to a specific word's length, and setting it to zero.
First, we rewrite it for mathematical convenience as:\looseness=-1
\begin{equation} 
   \expect_{p(\word)} \expect_{p(\context \mid \word)} \left( \frac{\surp }{\length} - \capacity \right)^2 
\end{equation}
where we make the quadratic cost function explicit.
We note this function is convex, and so if we find a point where its derivative is zero, we also find its global minimum.
We now take its derivative with respect to a specific word's length $|\vocabfunc(\word)|$ and set this derivative to zero:
\begin{equation} 
   p(\word) \expect_{p(\context \mid \word)} \left[ 2\,\left( \frac{\entfunc(\word \mid \context) }{\length} - \capacity \right) \frac{\entfunc(\word \mid \context) }{\length^2} \right] = 0
\end{equation}
where we note that all terms involving other words will have derivative zero (with respect to this specific word $\word$'s length).
As the expectation is a linear operation, we can rewrite this equation as:
\begin{equation} 
   \expect_{p(\context \mid \word)} \left[ \frac{\entfunc^2(\word \mid \context) }{\length^3} \right] = \expect_{p(\context \mid \word)} \left[ \capacity\, \frac{\entfunc(\word \mid \context) }{\length^2} \right]
\end{equation}
Note that both the length and capacity are constant with respect to the expectation over contexts.
Isolating the length term, thus, we get:
\begin{align}
    |\vocabfunc(\word)| 
    &= \frac{1}{\capacity}\,\frac{\expect\limits_{p(\context \mid \word)} \left[ \entfunc^2(\word \mid \context) \right]}{\expect\limits_{p(\context \mid \word)} \left[ \surp \right]} 
\end{align}
This completes the proof.
\end{proof}

\section{Proof of \Cref{prop:piantadosi-lower-bound}}\label{app:proof_lower_bound}

\PiantLowerBoundProposition*
\begin{proof}
It can be easily shown by Jensen's inequality that for any choice of $\vocabfunc$:
\begin{subequations}
\begin{align}
& \expect_{p(\word, \context)} \cost{\frac{\surp }{\length}}{\capacity} \\ 
&\qquad= \expect_{p(\word)} \expect_{p(\context \mid \word)} \cost{\frac{\surp }{\length}}{\capacity}  \\
&\qquad\geq \expect_{p(\word)} \cost{\frac{\expect\limits_{p(\context \mid \word)} \left[ \surp \right] }{\length}}{\capacity}  \\
&\qquad = \expect_{p(\word)} \cost{\frac{ \surpavg }{\length}}{\capacity}
\end{align}
\end{subequations}
which completes the proof.
\end{proof}

\section{Data Statistics} \label{app:extra_data}

We provide dataset statistics in \cref{tab:data_stats}.

\begin{table*}
    \centering
    \resizebox{\textwidth}{!}{%
    \begin{tabular}{lllcrrcrrcrr}
        \toprule
&&&& \multicolumn{2}{c}{None} & \multicolumn{2}{c}{No Punctuation} & \multicolumn{2}{c}{Only in Alphabet} \\
\cmidrule(lr){5-6}\cmidrule(lr){7-8}\cmidrule(lr){9-10}
Language & Family & ISO code & BPC & \# Types & \# Tokens & \# Types & \# Tokens & \# Types & \# Tokens \\
         \midrule
German & Indo-European & de & 0.99 & 2,093,524 & 32,142,917 & 1,027,594 & 27,565,045 & 896,752 & 26,301,145 \\
Greek & Indo-European & el & 1.06 & 267,625 & 2,244,964 & 145,073 & 1,954,950 & 120,361 & 1,862,380 \\
English & Indo-European & en & 1.09 & 2,419,694 & 78,392,487 & 748,109 & 66,881,077 & 609,839 & 65,261,138 \\
Spanish & Indo-European & es & 1.04 & 993,894 & 21,472,091 & 380,407 & 18,852,688 & 332,668 & 18,458,823 \\
Estonian & Uralic & et & 1.23 & 250,860 & 999,296 & 145,003 & 797,817 & 123,863 & 736,860 \\
Finnish & Uralic & fi & 1.03 & 566,755 & 2,741,783 & 333,229 & 2,249,003 & 318,514 & 2,156,132 \\
Hebrew & Afro-Asiatic & he & 1.41 & 497,550 & 4,153,846 & 230,867 & 3,394,959 & 208,077 & 3,299,727 \\
Italian & Indo-European & it & 1.04 & 823,397 & 14,500,421 & 297,153 & 12,371,701 & 269,005 & 12,056,925 \\
Korean & Koreanic & ko & 2.40 & 538,093 & 1,953,812 & 385,948 & 1,622,552 & 331,060 & 1,460,230 \\
Dutch & Indo-European & nl & 1.00 & 534,101 & 6,811,124 & 246,458 & 5,989,335 & 227,466 & 5,768,607 \\
Norwegian & Indo-European & no & 1.14 & 325,983 & 2,672,869 & 176,089 & 2,318,082 & 164,401 & 2,253,985 \\
Russian & Indo-European & ru & 1.06 & 1,474,777 & 15,824,324 & 707,806 & 13,073,143 & 546,179 & 11,922,147 \\
Turkish & Turkic & tr & 1.14 & 285,988 & 1,705,030 & 163,352 & 1,371,183 & 148,522 & 1,306,751 \\
\bottomrule
    \end{tabular}%
    }
    \caption{Wiki40B data statistics.}
    \label{tab:data_stats}
\end{table*}

\section{Further Results} \label{app:further_results}

For a more detailed reading, we provide MSE and Spearman correlation plots similar to \cref{fig:mse,fig:corrs}'s but as bar plots in \cref{fig:corrs_app}.
We also provide per-language results: 
\begin{itemize}
    \item as a function of the word filtering protocol used in our analysis in \cref{fig:filtering_protocol_app};
    \item as a function of the number of word types included in our analysis in \cref{fig:ntypes_app};
    \item as a function of our language model's cross-entropy in \cref{fig:xent_app}; and 
    \item as a function of the number of tokens used to train our language models and to get word count statistics in \cref{fig:ntokens}.
\end{itemize}

\noindent
We also provide results when \costcch is defined using generalized \errfunc functions, i.e., for several values of $\lambda$, in \cref{fig:lambda_cost}.
Finally, we show the Spearman correlation between \costcch and \costlower and unigram surprisal as a function of the used language model's quality in \cref{fig:correlation_freq_app}.

\begin{figure}[h]
    \centering
    \includegraphics[width=\columnwidth]{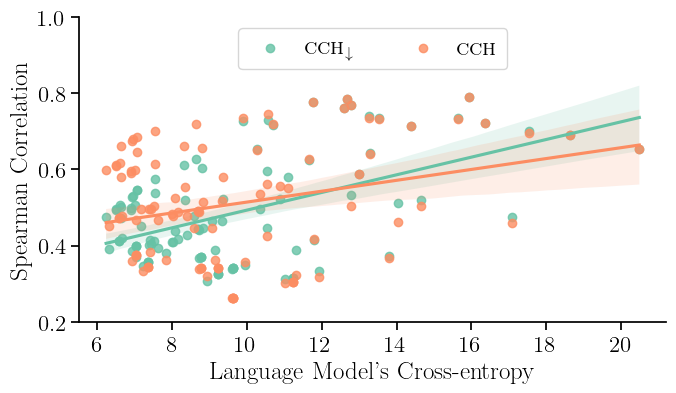}
    \caption{Spearman correlation with unigram surprisal as a function of the cross-entropy of models used to get surprisal estimates}
    \label{fig:correlation_freq_app}
\end{figure}

\begin{figure*}
    \centering
    \begin{subfigure}[b]{.5\textwidth}
        \includegraphics[width=\textwidth]{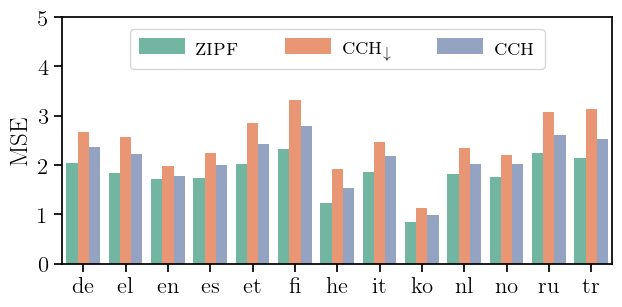}
    \end{subfigure}%
    ~
    \begin{subfigure}[b]{.5\textwidth}
        \includegraphics[width=\textwidth]{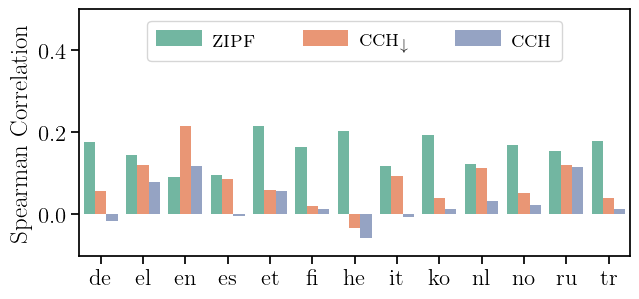}
    \end{subfigure}
    \caption{MSE and Spearman correlation of the three hypotheses in the analyzed languages.}
    \label{fig:corrs_app}
\end{figure*}

\begin{figure*}
    \centering
    \begin{subfigure}[b]{.5\textwidth}
        \includegraphics[width=\textwidth]{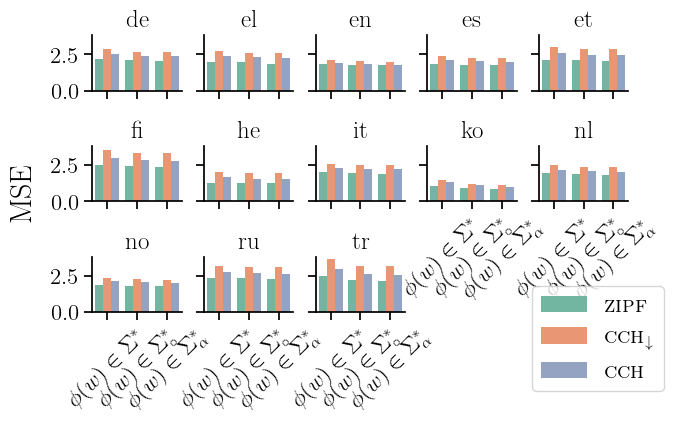}
    \end{subfigure}%
    ~
    \begin{subfigure}[b]{.5\textwidth}
        \includegraphics[width=\textwidth]{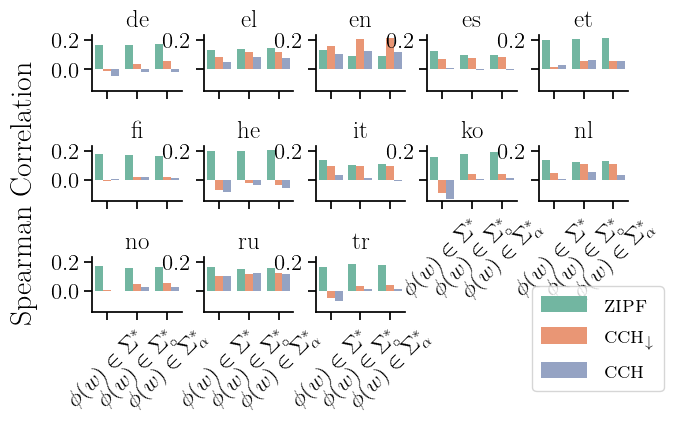}
    \end{subfigure}
    \caption{MSE and Spearman correlation when hypotheses are evaluated while filtering test set words based on different protocols.}
    \label{fig:filtering_protocol_app}
\end{figure*}

\begin{figure*}
    \centering
    \begin{subfigure}[b]{.5\textwidth}
        \includegraphics[width=\textwidth]{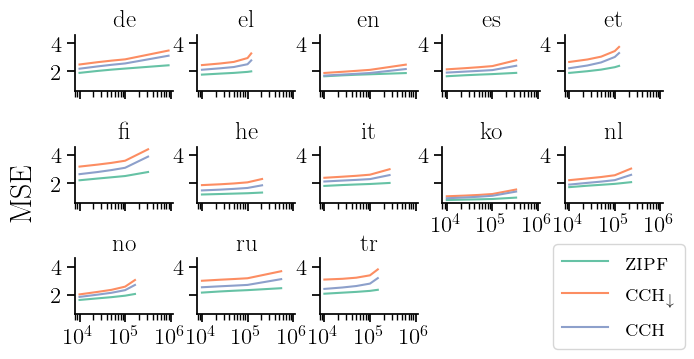}
    \end{subfigure}%
    ~
    \begin{subfigure}[b]{.5\textwidth}
        \includegraphics[width=\textwidth]{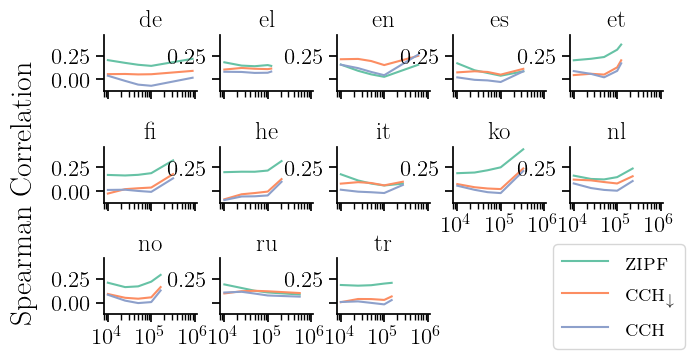}
    \end{subfigure}
    \caption{MSE and Spearman correlation when hypotheses are evaluated on different number of word types.}
    \label{fig:ntypes_app}
\end{figure*}

\begin{figure*}
    \centering
    \begin{subfigure}[b]{.5\textwidth}
        \includegraphics[width=\textwidth]{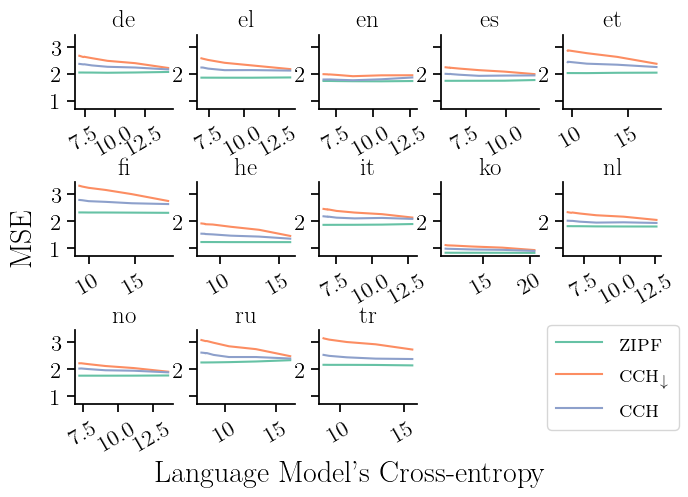}
    \end{subfigure}%
    ~
    \begin{subfigure}[b]{.5\textwidth}
        \includegraphics[width=\textwidth]{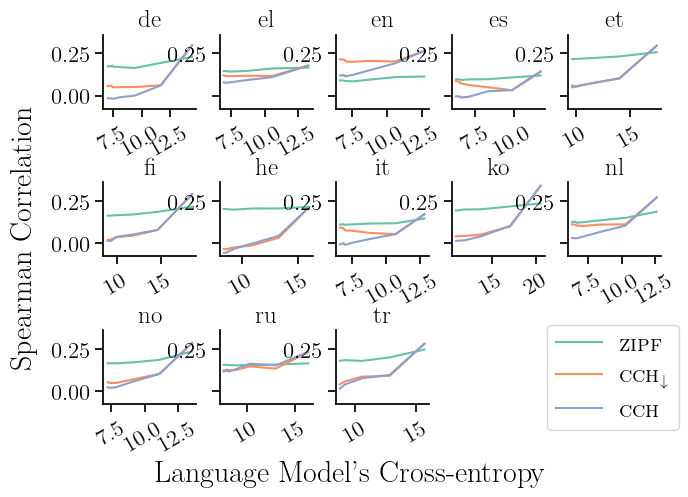}
    \end{subfigure}
    \caption{MSE and Spearman correlation as a function of the cross-entropy of models used to get surprisal estimates.}
    \label{fig:xent_app}
\end{figure*}

\begin{figure*}[h]
    \centering
    \begin{subfigure}[b]{.5\textwidth}
        \includegraphics[width=\textwidth]{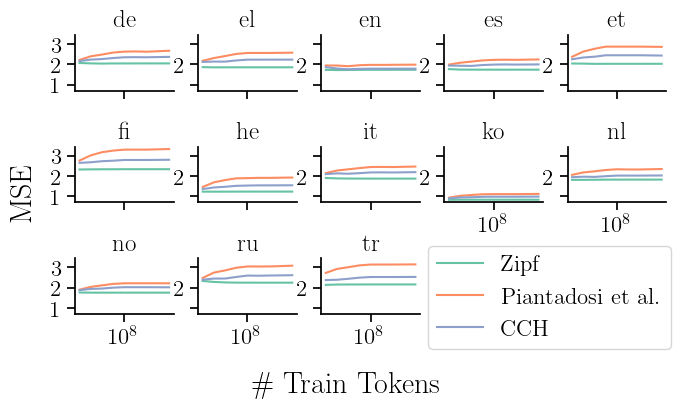}
    \end{subfigure}%
    ~
    \begin{subfigure}[b]{.5\textwidth}
        \includegraphics[width=\textwidth]{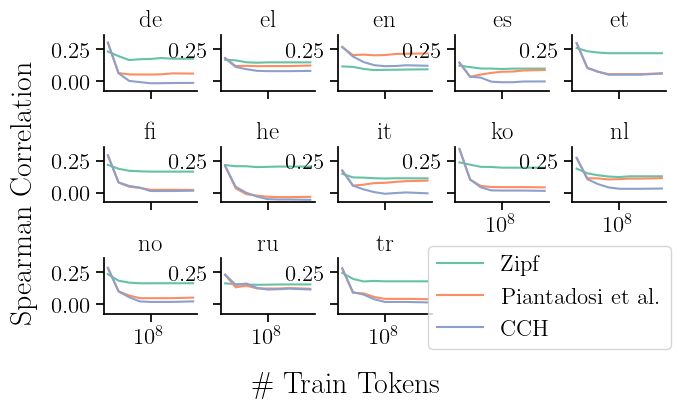}
    \end{subfigure}%
    \caption{Results as a function of the number of tokens used to train language models and get count statistics.}
    \label{fig:ntokens}
\end{figure*}

\begin{figure*}
    \centering
    \begin{subfigure}[b]{.5\textwidth}
        \includegraphics[width=\textwidth]{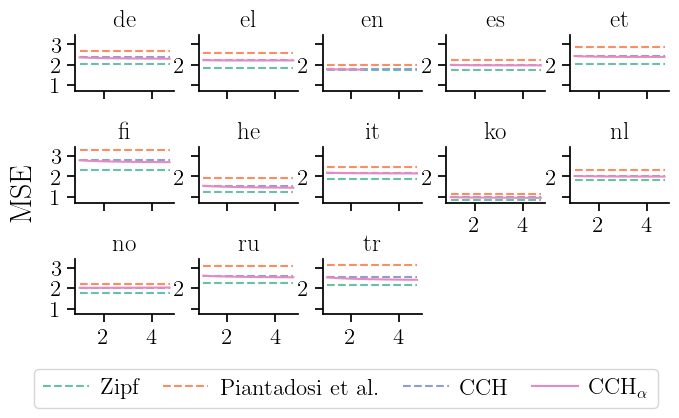}
    \end{subfigure}%
    ~
    \begin{subfigure}[b]{.5\textwidth}
        \includegraphics[width=\textwidth]{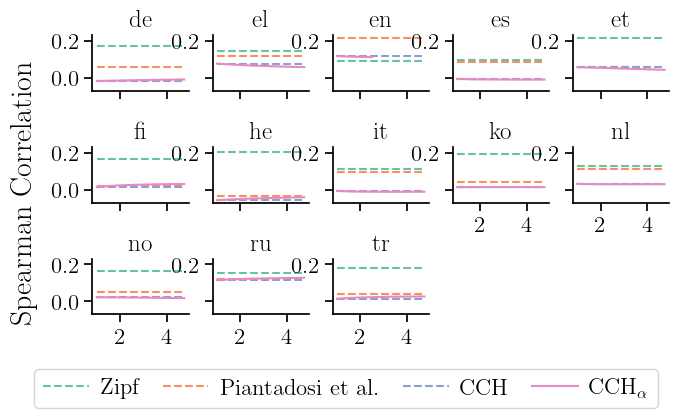}
    \end{subfigure}
    \caption{MSE and Spearman correlation as a function of the generalized \errfunc's parameter $\lambda$ (in the $x$-axis).}
    \label{fig:lambda_cost}
\end{figure*}

\end{document}